\setlist{labelindent=\parindent, leftmargin=*}
\newtheorem{theorem}{Theorem}
\newtheorem{lemma}[theorem]{Lemma}
\newtheorem{proposition}{Proposition}
\begin{document}

\title{Rethinking Spectral Graph Neural Networks with Spatially Adaptive Filtering}

\author{Jingwei~Guo,
        Kaizhu~Huang*,
        Xinping~Yi,
        Zixian~Su,
        and~Rui~Zhang
\IEEEcompsocitemizethanks{
\IEEEcompsocthanksitem J. Guo is with University of Liverpool, Liverpool, UK and is also with Xi’an Jiaotong-Liverpool University, Suzhou, China (E-mail:~jingweiguo19@outlook.com); K. Huang is with Duke Kunshan University, Suzhou, China(E-mail:~kaizhu.huang@dukekunshan.edu.cn); X. Yi is with Southeast University, Nanjing, China(E-mail:~xyi@seu.edu.cn); Z. Su is with University of Liverpool, Liverpool, UK and is also with Xi’an Jiaotong-Liverpool University, Suzhou, China (E-mail:~zixian.su@liverpool.ac.uk); R. Zhang is with Xi’an Jiaotong-Liverpool University, Suzhou, China(
E-mail:~rui.zhang02@xjtlu.edu.cn); *Corresponding author: Kaizhu Huang.}
}



\maketitle

\begin{abstract}
Whilst spectral Graph Neural Networks (GNNs) are theoretically well-founded in the spectral domain, their practical reliance on polynomial approximation implies a profound linkage to the spatial domain. As previous studies rarely examine spectral GNNs from the spatial perspective, their spatial-domain interpretability remains elusive, \textit{e.g.}, what information is essentially encoded by spectral GNNs in the spatial domain? In this paper, to answer this question, we investigate the theoretical connection between spectral filtering and spatial aggregation, unveiling an intrinsic interaction that spectral filtering implicitly leads the original graph to an adapted new graph, explicitly computed for spatial aggregation. Both theoretical and empirical investigations reveal that the adapted new graph not only exhibits non-locality but also accommodates signed edge weights to reflect label consistency among nodes. These findings highlight the interpretable role of spectral GNNs in the spatial domain and inspire us to rethink graph spectral filters beyond the fixed-order polynomials, which limit the effective propagation range and hinder their ability to capture long-range dependencies. Built upon the theoretical findings, we revisit the state-of-the-art spectral GNNs and propose a novel Spatially Adaptive Filtering (SAF) framework, which leverages the adapted new graph by spectral filtering for an auxiliary non-local aggregation. Notably, our SAF comprehensively models both node similarity and dissimilarity from a global perspective, therefore alleviating persistent deficiencies of GNNs related to long-range dependencies and graph heterophily. Extensive experiments over 13 node classification benchmarks demonstrate the superiority of our proposed framework to the state-of-the-art methods.
\end{abstract}

\begin{IEEEkeywords}
Graph Neural Networks, Spectral Filtering, Long-range Dependency, Graph Heterophily
\end{IEEEkeywords}

\section{Introduction}
Graph Neural Networks (GNNs) have shown remarkable abilities to uncover the intricate dependencies within graph-structured data, and achieved tremendous success in graph machine learning~\cite{shanthamallu2019gramme,Wu2020ACS,ji2021survey,chami2022machine,gao2023survey,gravina2024deep}. Spectral GNNs are a class of GNNs rooted in spectral graph theory~\cite{chung1997spectral,shuman2013emerging}, implementing graph convolutions via spectral filters~\cite{defferrard2016convolutional,kipf2017semi}. Whilst various spectral filtering strategies~\cite{levie2018cayleynets,chien2021adaptive,he2021bernnet,ju2022adaptive,he2022chebnetii,Wang2022HowPA,guo2023graph,tao2023longnn,guo2023OptBasisGNN} have been proposed for spectral GNNs, their practical implementations invariably resort to approximating graph filters with fixed-order polynomials for computational efficiency~\cite{Wang2022HowPA,he2022chebnetii}. This truncated approach essentially relies on the direct extraction of spatial features from the local regions of nodes. As such, the spatial domain of a graph, albeit loosely connected to spectral GNNs in theory, still plays a crucial role in effectively learning node representations.

However, there is a notable lack of research examining spectral GNNs from the spatial perspective. Though recent studies analyze both spectral and spatial GNNs to elucidate their similarities in model formulations~\cite{balcilar2020bridging,chen2021bridging,guo2023clenshaw}, outcomes~\cite{ma2021unified,zhu2021interpreting}, and expressiveness~\cite{chen2020simple,balcilar2021analyzing,Wang2022HowPA,sun2023feature}, they 
neglect
the interpretability that could arise mutually from the other domain. Specifically, while most spectral GNNs have well explained their learned filters in the spectral domain~\cite{he2021bernnet,Wang2022HowPA,he2022chebnetii,guo2023graph}, understandings from the spatial viewpoint are merely limited to fusing multi-scale graph information~\cite{liao2019lanczosnet}; this unfortunately lacks a deeper level of interpretability in the vertex domain. Therefore, a natural question arises: \textit{what information is essentially encoded by spectral GNNs in the spatial domain?}

In this work, we attempt to answer this question by exploring the connection between spectral filtering and spatial aggregation. The former is the key component in spectral GNNs, while the latter is closely associated with spatial GNNs utilizing 
recursive neighborhood aggregation.
In existing GNN frameworks, these two approaches rarely interact each other at the risk of domain information trade-offs due to uncertainty principles~\cite{heisenberg1927anschaulichen,folland1997uncertainty,agaskar2013spectral}. 
Recognizing the spatial significance in spectral filtering,~\cite{he2021bernnet} have recently considered non-negative constraints as part of a generalized graph optimization problem. Notably, however, spatial aggregation meanwhile resembles the optimizing trajectory of the same optimization problem through iterative steps, which may be easily overlooked.
Inspired by such observation, we examine, for the first time, the theoretical interaction between spectral filtering and spatial aggregation. This exploration has led us to uncover an intriguing theoretical interplay, \textit{i.e.}, spectral filtering implicitly modifies the original graph, transforming it into a new one that explicitly functions as a computation graph for spatial aggregation.
Delving deeper, we discover that the adapted new graph enjoys some desirable properties, enabling a direct link among nodes that originally require multiple hops to do so, thereby exhibiting nice non-locality. Moreover, we find that the new graph edges allow signed weights, which turns out capable of distinguishing between label agreement and disagreement of the connected nodes.

Overall, these findings underscore the interpretable role and significance of spectral GNNs in the spatial domain, inspiring us to rethink graph spectral filters beyond fixed-order polynomials, which, albeit efficient, limit models' effective propagation range and hinder their ability to capture long-range dependencies.
Concretely, we propose a novel Spatially Adaptive Filtering (SAF) framework to fully explore spectral GNNs in the spatial domain. SAF leverages the adapted new graph by spectral filtering for auxiliary spatial aggregation and allows individual nodes to flexibly balance between spectral and spatial features. By performing non-local aggregation with signed edge weights, 
SAF adeptly overcomes the limitations of truncated polynomials, enabling the model to capture both node similarity and dissimilarity at a global scale.
As a benefit, it can mitigate persistent deficiencies of GNNs regarding long-range dependencies and graph heterophily. The contributions are summarized as follows:

\begin{itemize}
    \item Our investigation into spectral GNNs in the spatial domain reveals that they fundamentally alters the original graph, introducing non-locality and signed edge weights to discern node label consistency.
    
    \item We propose Spatially Adaptive Filtering (SAF) framework, a paradigm-shifting approach to spectral GNNs that jointly leverages graph learning in both spatial and spectral domains, making it a powerful tool for capturing long-range dependencies and handling graph heterophily.

    \item Extensive experiments in node classification
    exhibit notable improvements of up to 15.37\%, and show that SAF beats the best-performing spectral GNNs on average.
\end{itemize}

\section{Motivation and Related Works}\label{sec:related_work}

This section outlines the motivation behind our research, derived from thoughtful consideration of the existing related works, and elucidates how our work diverges from and contributes to the current body of research.

\subsection{Graph Neural Networks}
GNNs can be broadly divided into spatial-based and spectral-based methods. 
Spatial GNNs leverage the spatial connections among nodes to perform message passing, also known as spatial aggregation~\cite{gilmer2017neural,hamilton2017inductive} (readers are directed to works~\cite{zhou2020graph,Wu2020ACS} for a thorough review). 
Spectral GNNs leverage the graph's spectral domain for convolution or, alternatively, 
spectral filtering~\cite{kipf2017semi,dong2021adagnn}.
Prevailing approaches focus on developing polynomial graph filters, by either learning polynomial coefficients, such as GPR-GNN~\cite{chien2021adaptive}, BernNet~\cite{he2021bernnet}, ChebNetII~\cite{he2022chebnetii}, and JacobiConv~\cite{Wang2022HowPA}, or optimizing the polynomial basis for better adaption, as seen in models like LON-GNN~\cite{tao2023longnn} and OptBasisGNN~\cite{guo2023OptBasisGNN}.
Diverging from this trend, ARMA~\cite{bianchi2021graph} employs rational filter functions 
while still approximating them with polynomials.
Although these methods are theoretically grounded in the spectral domain, their practical reliance on polynomial approximation hints at a profound linkage to the spatial domain. However, the spatial-domain interpretation of spectral GNNs is rarely examined. To this end, we delve into in this paper the intrinsic information spectral GNNs convey within the spatial context.

\subsection{Unified Viewpoints for GNNs.}

Several works have explored the nuances between spatial and spectral GNNs. Early studies by~\cite{balcilar2020bridging} and~\cite{chen2021bridging} examined their similarities in model formulations. \cite{chen2020simple} proved their spatial GNN's anti-oversmoothing ability via spectral analysis.
\cite{ma2021unified} and~\cite{zhu2021interpreting} utilized the graph signal denoising problem to integrate both GNN types, and their expressiveness equivalence is further explored 
in works~\cite{balcilar2021analyzing,Wang2022HowPA}.
Recently,~\cite{sun2023feature} have highlighted the feature space constraints of both spatial and spectral GNNs, 
while~\cite{guo2023clenshaw} tend to emphasize their relationship via residual connection.
Though these studies effectively bridge spectral and spatial GNNs, they remain focused on congruencies.
Unlike them, our work represents the first endeavor to delve into the interpretability of spectral GNNs in the spatial domain, emphasizing the theoretical synergy between spectral filtering and spatial aggregation.
The empirical success of our method 
(as compared to unified GNNs in Tables~\ref{tab:semi_nc} and~\ref{tab:full_nc}), 
stem from this in-depth analysis, further underscoring our practical contributions to the literature.

\subsection{Long-range Dependencies}

While substantial efforts have been directed towards capturing long-range dependencies in spatial GNNs~\cite{Klicpera2019PredictTP,chen2020simple,eliasof2021pde,pei2020geom,liu2021non,wu2022nodeformer,ding2024toward}, the exploration of the same challenge in spectral GNNs remains under-studied. 
Specifically, most spectral GNNs approximate graph filters with fixed-order polynomials,
which, albeit efficient, limits the effective propagation range and hinder their ability to capture the long-range dependencies.
To fill this gap, we propose SAF framework
that emerges as a valuable consequence of analyzing spectral GNNs in the spatial domain.
Aligned with our objective, Specformer~\cite{bo2023specformer} is introduced to addresses long-range dependencies for spectral GNNs, using a Transformer based set-to-set spectral filter. However, it lacks spatial-domain interpretability and introduce more trainable parameters. In contrast, our approach creates a non-local new graph without learning additional parameters, meanwhile elucidating the
spatial implications of spectral GNNs.
Similarly, a recent approach, FLODE~\cite{maskey2024fractional}, also produces matrices capable of capturing long-range dependencies by utilizing the fractional graph Laplacian~\cite{benzi2020non}.
However, this work mainly generalizes existing concepts from undirected to directed graphs to mitigate graph oversmoothing. Our study, conversely, focuses on exploring the fundamental issues surrounding undirected graphs to delve deeper into the intrinsic significance of spectral GNNs.

\subsection{Graph Heterophily}

Graph heterophily~\cite{pei2020geom,zhu2020beyond}, where different labeled nodes connect, challenges GNNs operating under the homophily assumption~\cite{mcpherson2001birds}. 
Although many GNNs have been crafted to manage heterophilic connections~\cite{fagcn2021,wang2021graph,chien2021adaptive,guo2022gnn,yan2023trainable,chen2023exploiting,yoo2023less,cui2023mgnn,li2024permutation},
our method offers a distinct solution. Specifically, SAF innovatively conducts an auxiliary non-local aggregation using signed edge weights, emphasizing both intra-class similarity and inter-class difference on a global scale.
One should note that there are two recent works~\cite{li2022finding,liang2023predicting} also employ signed edge weights, introducing GloGNN and LRGNN, respectively. GloGNN aims to capture global homophily but is limited to K-hop neighborhood information, while LRGNN extends this by using low-rank properties to approximate the true global relationships between node labels. Despite their success, both methods focus primarily on fitting node label relationships without fully exploring the fundamental GNN mechanisms, leading to a heavy reliance on label supervision (Section~\ref{apdix:sparse_advantange}).
Differently, our SAF leverages cross-domain insights into GNNs, ensuring the theoretical soundness of non-local learning, as proven in Section~\ref{sec:graphAnalysis}. This allows SAF to consistently perform well under both sparse and dense supervision.

\section{Notations and Preliminaries}

Let $\mathcal{G} = (\mathcal{V}, \mathcal{E})$ be a graph with node set $\mathcal{V}$ and edge set $\mathcal{E}$, where the number of nodes is denoted by $N$. 
The adjacency matrix $\mathbf{A} \in \mathbb{R}^{N \times N}$ defines the edge weights $A_{i,j}$ between nodes $v_i$ and $v_j$, and the degree matrix $\mathbf{D}$ can be obtained by summing the rows of $\mathbf{A}$ into a diagonal matrix.
We denote the graph Laplacian matrix as 
$\mathbf{L}$,
which is often normalized into $\hat{\mathbf{L}}= \mathbf{I} - \hat{\mathbf{A}}$ with $\hat{\mathbf{A}}=\mathbf{D}^{-\frac{1}{2}}\mathbf{A}\mathbf{D}^{-\frac{1}{2}}$ 
and an identity matrix $\mathbf{I}$. 
Let $\hat{\mathbf{L}}=\mathbf{U} \mathbf{\Lambda} \mathbf{U}^T$ be 
the Laplacian eigendecomposition,
where $\mathbf{U}$ is eigenvector matrix 
and $\mathbf{\Lambda}=\mathrm{diag}(\lambda_1,\lambda_2,\cdots,\lambda_N)$ consists of eigenvalues 
within $\left[0,2\right]$. 
For node classification, we usually have a feature matrix $\mathbf{X} \in \mathbb{R}^{N \times F}$ with $F$ being raw feature dimensions, and each node is assigned a
one-hot label vector $\mathbf{y}_i \in \mathbb{R}^C$ where $C \leq N$ is class number.

\subsection{Spectral Filtering}
Spectral filtering is essential in spectral GNNs. It selectively shrinks or amplifies the Fourier coefficients of node features~\cite{defferrard2016convolutional} for learning tasks and usually take the form as
\begin{equation}\label{eq:spec_filter}
    \mathbf{Z} = g_\psi(\hat{\mathbf{L}}) \mathbf{X} = \mathbf{U} g_\psi(\mathbf{\Lambda}) \mathbf{U}^T \mathbf{X}.
\end{equation}
Here, $g_\psi: \left[0, 2\right] \to \mathbb{R}$ defines a graph filter function, which are often approximated by a $K$-order polynomial in practice. Specifically, we have $g_\psi(\lambda)=\sum^K_{k=0} \psi_k P_k(\lambda)=\sum^K_{k=0} \omega_k \lambda^k$ where $P_k: \left[0, 2\right] \to \mathbb{R}$ refers to a polynomial basis and both $\psi_k$ and $\omega_k$ denote the polynomial coefficient.

\subsection{Spatial Aggregation}
Spatial aggregation is a central component of spatial GNNs, facilitating the propagation of node information along edges and its subsequent aggregation within node neighborhood. To provide a better illustration, let's consider a popular spatial GNN, APPNP~\cite{Klicpera2019PredictTP}.
This model begins with a feature transformation -- $\mathbf{Z}^{(0)} = f(\mathbf{X})$, and then perform the spatial aggregation as:
\begin{equation}\label{eq:spat_agg}
\mathbf{Z}^{(k)} = (1-\eta) \mathbf{Z}^{(0)} + \eta \hat{\tilde{\mathbf{A}}} \mathbf{Z}^{(k-1)}, \quad k=1,2,\cdots,K,
\end{equation}
where $\hat{\tilde{\mathbf{A}}}=\tilde{\mathbf{D}}^{-\frac{1}{2}} \tilde{\mathbf{A}} \tilde{\mathbf{D}}^{-\frac{1}{2}}$, $\tilde{\mathbf{A}} = \mathbf{A} + \mathbf{I}$, and $\eta$ refers to the update rate.

\section{Rethinking Spectral Graph Neural Networks from the Spatial Perspective}
In this section, we provide both theoretical and empirical analyses to examine spectral GNNs from the spatial perspective and answer the question, \textit{i.e.}, what information is essentially encoded by spectral GNNs in the spatial domain?

\subsection{Interplay of Spectral and Spatial Domains through the Lens of Graph Optimization}\label{sec:interplay}
The graph signal denoising problem~\cite{shuman2013emerging} was initially leveraged in~\cite{ma2021unified,zhu2021interpreting} as a means to interpret GNNs with smoothness assumption, which yet does not always hold in certain real-world graph scenarios such as heterophily~\cite{zhu2020beyond}. Without loss of generality, 
in this work, we consider a more generalized graph optimization problem\footnote{This problem was first introduced in~\cite{he2021bernnet} for theoretically grounded graph filters. However, in this study, we repurpose it as a bridge between spectral filtering and spatial aggregation.}
\begin{equation}\label{eq:generalized_opt}
\mathop{\arg \min }_{\mathbf{Z}}\ \mathcal{L} = 
\alpha \|\mathbf{X} - \mathbf{Z}\|^2_F + (1 - \alpha) \cdot \mathrm{tr}(\mathbf{Z}^T
\gamma_{\theta}(\hat{\mathbf{L}}) \mathbf{Z})
\end{equation}
where $\mathbf{Z} \in \mathbb{R}^{N \times d}$ refers to node representations, $\gamma_\theta(\hat{\mathbf{L}})$ determines the rate of propagation~\cite{spielman2012spectral} by operating on the graph spectrum, i.e., $\gamma_{\theta}(\hat{\mathbf{L}}) = \mathbf{U} \gamma_{\theta}({\mathbf{\Lambda}}) \mathbf{U}^T$, and $\alpha \in (0, 1)$ is a trade-off coefficient.
In case of setting $\gamma_{\theta}(\hat{\mathbf{L}}) = \hat{\mathbf{L}}$, Eq.~(\ref{eq:generalized_opt}) turns into the well-known graph signal denoising problem.
To ensure the convexity of the objective in Eq.~(\ref{eq:generalized_opt}), a positive semi-definite constraint is imposed on $\gamma_{\theta}(\hat{\mathbf{L}})$, \textit{i.e.}, $\gamma_{\theta}(\lambda) \geq 0$ for $\lambda \in \left[0,2\right]$. 
Then, one can address this minimization problem through either closed-form or iterative solutions.

\subsubsection{Closed-form Solution} 
The closed-form solution can be obtained by setting the derivative of the objective function $\mathcal{L}$ to 0, i.e., $\frac{\partial \mathcal{L}}{\partial \mathbf{Z}} = 2 \alpha (\mathbf{Z} - \mathbf{X}) + 2 (1-\alpha) \gamma_\theta (\hat{\mathbf{L}}) \mathbf{Z} = 0 $.
Let $g_\psi (\lambda) = (1 + \frac{1-\alpha}{\alpha} \gamma_{\theta} (\mathbf{\lambda}))^{-1}$, 
we can observe that the closed-form solution in Eq.~(\ref{eq:clos_sol}) is equivalent to the spectral filtering in Eq.~(\ref{eq:spec_filter}).
\begin{equation}\label{eq:clos_sol}
\mathbf{Z}^{*} 
= (\mathbf{I} + \frac{1-\alpha}{\alpha} \gamma_{\theta} (\hat{\mathbf{L}}))^{-1} \mathbf{X}
=  g_\psi (\hat{\mathbf{L}}) \mathbf{X}
\end{equation}
As $\gamma_\theta (\lambda) \geq 0$, this establishes a more stringent constraint for the graph filter in spectral GNNs, i.e.,
$0 
<
g_\psi (\lambda) \leq 
\frac{\alpha}{\alpha + (1-\alpha) \cdot 0} =1$, which is termed as a non-negative constraint in this paper.

\subsubsection{Iterative Solution} 
Alternatively, we can take an iterative gradient descent method such that 
$\mathbf{Z}^{(k)} = \mathbf{Z}^{(k-1)} - b \frac{\partial \mathcal{L}}{\partial \mathbf{Z}}|_{\mathbf{Z}=\mathbf{Z}^{(k-1)}}$ with a step size $b=\frac{1}{2}$, which yields a concise iterative solution in Eq.~(\ref{eq:iter_sol}) with $\hat{\mathbf{A}}^{\text{new}} = \mathbf{I}-\gamma_\theta(\hat{\mathbf{L}})$. Notably, by taking $\hat{\mathbf{A}}^{\text{new}}$ as a new computation graph, this solution closely mirrors the spatial aggregation in Eq.~(\ref{eq:spat_agg}).
\begin{align}\label{eq:iter_sol}
\mathbf{Z}^{(k)} 
= \alpha \mathbf{X} + (1-\alpha) \hat{\mathbf{A}}^{\text{new}} \mathbf{Z}^{(k-1)}, \quad k=1,2,\cdots,K
\end{align}

\subsubsection{Theoretical Interaction --- the Adapted New Graph}
With the non-negative constraint, it is evident that both spectral filtering and spatial aggregation effectively address the generalized graph optimization problem in Eq.~(\ref{eq:generalized_opt}), despite their distinctive forms and operation domains.
Upon closer examination, we discover a compelling relationship between the graph filter $g_\psi(\lambda)$ in Eq.~(\ref{eq:clos_sol}) and the new graph $\hat{\mathbf{A}}^\text{new}$ in Eq.~(\ref{eq:iter_sol}),
given $g_\psi (\lambda) = (1 + \frac{1-\alpha}{\alpha} \gamma_{\theta} (\mathbf{\lambda}))^{-1}$,
\begin{equation}\label{eq:theo_interplay}
\hat{\mathbf{A}}^{\text{new}} 
= \mathbf{I}-\gamma_\theta(\hat{\mathbf{L}}) 
= \mathbf{I} - \frac{\alpha}{1-\alpha}(g_\psi(\hat{\mathbf{L}})^{-1} - \mathbf{I})
\end{equation}
which unveils an intrinsic inter-play, \textit{i.e.}, 
spectral filtering implicitly leads the original graph to an adapted new graph, explicitly computed for spatial aggregation.

\subsubsection{What are the differences between $\hat{\mathbf{A}}^\text{new}$ and $g_\psi(\hat{\mathbf{L}})$} Whereas the former as the uncovered new graph elucidates the inherent spatial node relationships, the latter is a operation in spectral GNNs that processes graph features within spectral domain.
It is crucial to understand that $g_\psi(\hat{\mathbf{L}})$ may not result in a dense matrix, especially with fixed-order polynomial approximation. 
This is because it captures up to only a $K$-hop neighborhood, i.e., $g_\psi(\hat{\mathbf{L}}) = \sum_{k=0}^K \psi_k P_k(\hat{\mathbf{L}}) = \sum_{k=0}^K \omega_k \hat{\mathbf{A}}^k$, practically limiting spectral GNNs' effective propagation range.
In contrast, 
our newfound graph 
$\hat{\mathbf{A}}^\text{new}$ intrinsically enjoys a non-local property, as confirmed in the following section.
Building upon this discovery, we further devise a framework to transcend domain barriers, overcoming the limitations of current spectral GNNs due to truncated polynomials (see details in Section~\ref{sec:saf}).

\subsection{In-depth Analysis of the Adapted New Graph}\label{sec:graphAnalysis}
To deepen our understanding of the interpretability produced by spectral GNNs in the spatial domain, we embark upon a blend of theoretical and empirical inquiries into the adapted new graph.

\begin{figure*}[t]
\centering
\includegraphics[width=0.98\textwidth]{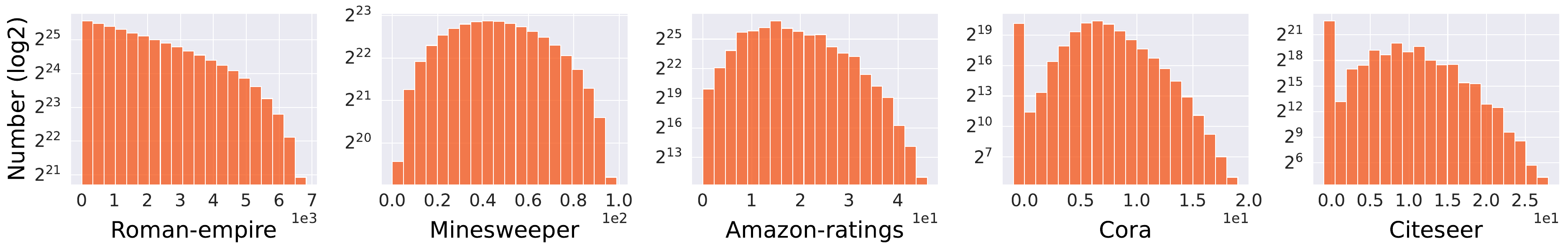}
\caption{
Distributions of connected nodes in the new graph based on their geodesic/shortest-path distance (as $\Delta_{i,j}$) in the original graph. Nodes, distant in the original graph ($\Delta_{i,j} > 1$ in x-axis), can be linked in the new graph (Number $> 0$ in y-axis).
}
\label{fig:nonLoc}
\end{figure*}

\begin{figure*}[t]
\centering
\includegraphics[width=0.98\textwidth]{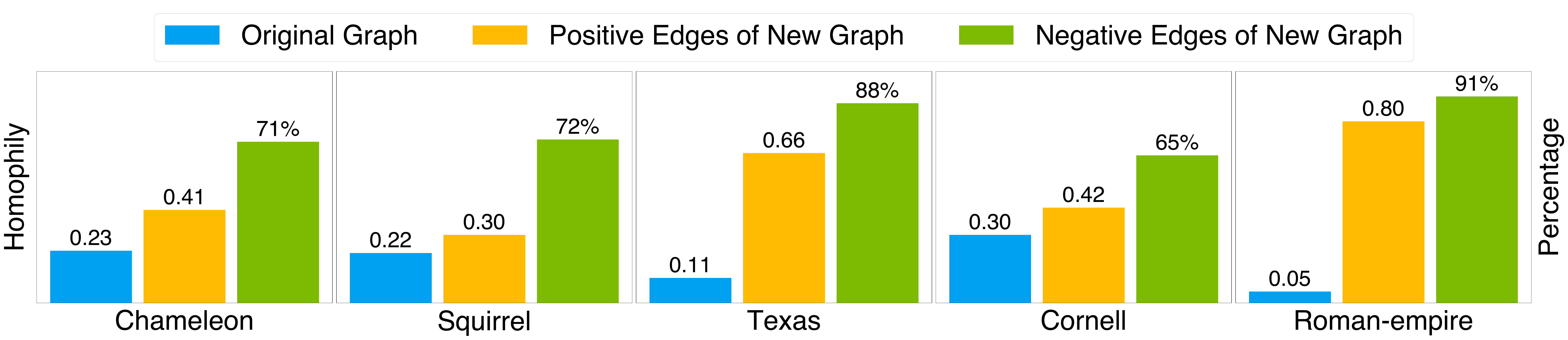}
\caption{Left y-axis: Homophily comparison between original and new graphs, considering only positive edges (blue and yellow bars). Right y-axis: 
Percentage of negative edges in the new graph that connect nodes from different classes (green bar).
}
\label{fig:discern_labelConsis}
\end{figure*}

\subsubsection{Non-locality}
Our examination of the adapted new graph illuminates its non-local nature, particularly evident in the infinite series expansion of the original graph's adjacency matrix. To elucidate, we first introduce an pivotal mathematical construct, the Neumann series, in the following lemma.
\begin{lemma}\label{lem:Neumann}
    Let $\mathbf{M} \in \mathbb{R}^{N \times N}$ be a matrix with eigenvalues $\lambda_n$, if $|\lambda_n| < 1$ for all $n=1,2, ..., N$, then $(\mathbf{I} - \mathbf{M})^{-1}$ exists and can be expanded as an infinite series, i.e., $(\mathbf{I} - \mathbf{M})^{-1} = \sum_{t=0}^\infty \mathbf{M}^t$, which is known as Neumann series.
\end{lemma}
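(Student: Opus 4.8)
The plan is to establish the Neumann series via the standard telescoping-sum argument, which decomposes naturally into three parts: first verifying that $\mathbf{I} - \mathbf{M}$ is invertible, then identifying a telescoping identity for the partial sums, and finally showing that the remainder term vanishes in the limit.

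First I would argue invertibility directly from the spectrum. Since the eigenvalues of $\mathbf{M}$ are $\lambda_n$, the eigenvalues of $\mathbf{I} - \mathbf{M}$ are $1 - \lambda_n$. The hypothesis $|\lambda_n| < 1$ forces $\lambda_n \neq 1$, so each $1 - \lambda_n \neq 0$, and hence $\det(\mathbf{I} - \mathbf{M}) = \prod_{n=1}^N (1 - \lambda_n) \neq 0$. This guarantees that $(\mathbf{I} - \mathbf{M})^{-1}$ exists, settling the first claim of the lemma.

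Next I would introduce the partial sum $\mathbf{S}_T = \sum_{t=0}^T \mathbf{M}^t$ and exploit the algebraic telescoping identity
\begin{equation*}
(\mathbf{I} - \mathbf{M}) \mathbf{S}_T = \sum_{t=0}^T \mathbf{M}^t - \sum_{t=0}^T \mathbf{M}^{t+1} = \mathbf{I} - \mathbf{M}^{T+1}.
\end{equation*}
Left-multiplying by the now-established inverse yields $\mathbf{S}_T = (\mathbf{I} - \mathbf{M})^{-1}(\mathbf{I} - \mathbf{M}^{T+1})$, so the entire lemma reduces to showing that the residual $\mathbf{M}^{T+1}$ converges to the zero matrix as $T \to \infty$.

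This last step is the main obstacle, since $\mathbf{M}$ need not be diagonalizable and a naive eigenvector-based argument fails. I would control it through the spectral radius $\rho(\mathbf{M}) = \max_n |\lambda_n| < 1$. The cleanest route is to invoke that for any $\epsilon > 0$ there exists an induced matrix norm with $\|\mathbf{M}\| \leq \rho(\mathbf{M}) + \epsilon$; choosing $\epsilon$ small enough that $\rho(\mathbf{M}) + \epsilon < 1$ gives $\|\mathbf{M}^{T+1}\| \leq \|\mathbf{M}\|^{T+1} \to 0$. Alternatively, passing to the Jordan canonical form $\mathbf{M} = \mathbf{P}\mathbf{J}\mathbf{P}^{-1}$, the entries of $\mathbf{J}^{T+1}$ are sums of terms $\binom{T+1}{j}\lambda_n^{T+1-j}$ whose geometric decay dominates the polynomial factor $\binom{T+1}{j}$, again forcing $\mathbf{M}^{T+1} = \mathbf{P}\mathbf{J}^{T+1}\mathbf{P}^{-1} \to \mathbf{0}$. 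Taking $T \to \infty$ in the displayed identity then gives $\sum_{t=0}^\infty \mathbf{M}^t = (\mathbf{I} - \mathbf{M})^{-1}$, which completes the proof.
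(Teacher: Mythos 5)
Your proposal is mathematically correct, but there is nothing in the paper to compare it against: the paper never proves Lemma~\ref{lem:Neumann}. It states the Neumann series result as a known fact and delegates its justification to a citation (Horn and Johnson's \emph{Matrix Analysis}), immediately moving on to apply it in Proposition~\ref{prop:nonloc}. What you have written is, in effect, the standard textbook proof that the cited reference itself uses --- invertibility from the spectrum of $\mathbf{I}-\mathbf{M}$, the telescoping identity $(\mathbf{I}-\mathbf{M})\mathbf{S}_T = \mathbf{I}-\mathbf{M}^{T+1}$, and decay of the remainder via the spectral radius. All three steps are sound, and you correctly identify the one genuinely delicate point, namely that $\mathbf{M}$ need not be diagonalizable, so $\rho(\mathbf{M})<1 \Rightarrow \mathbf{M}^{T+1}\to\mathbf{0}$ cannot be argued naively from eigenvectors; both of your remedies (an induced norm with $\|\mathbf{M}\|\leq\rho(\mathbf{M})+\epsilon$, or the Jordan-form estimate $\binom{T+1}{j}|\lambda_n|^{T+1-j}\to 0$) are valid and standard. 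So your contribution relative to the paper is to make self-contained a fact the paper treats as black-box background; within the paper's logic, the substantive use of this lemma is the subsequent observation that the non-negative constraint $0<g_\psi(\lambda)\leq 1$ places the eigenvalues of $\mathbf{I}-g_\psi(\hat{\mathbf{L}})$ in $[0,1)$, which is where the paper's own argument actually begins.
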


\noindent With the established non-negative constraint on graph filters, specifically $0< g_\psi(\lambda) \leq 1$, it becomes evident that the eigenvalues of $\mathbf{I} - g_\psi(\hat{\mathbf{L}})$ falls into the interval permitting Neumann series expansion, as shown in lemma~\ref{lem:Neumann}~\cite{horn2012matrix}. Building on this observation, we present a non-trivial property of the new graph in the following proposition.

\begin{proposition}\label{prop:nonloc}
    Given adjacency matrix $\hat{\mathbf{A}}^{\text{new}}$ formulated in Eq.~(\ref{eq:theo_interplay}), the adapted new graph exhibits non-locality. Specifically, $\hat{\mathbf{A}}^{\text{new}}$ is expressible as an infinite series expansion of the original graph's adjacency matrix $\hat{\mathbf{A}}$. Formally, we have
    $
    \hat{\mathbf{A}}^{\text{new}} 
    = \mathbf{I} - \frac{\alpha}{1-\alpha} \sum_{t=1}^\infty (\mathbf{I} - 
\sum^K_{k=0} \pi_k \hat{\mathbf{A}}^k)^t
    = \sum_{t=0}^\infty \phi_t \hat{\mathbf{A}}^t
    $
    where $\pi_k$ and $\phi_t$ refer to the constant coefficients computed from 
    $\{\psi_0,\psi_1,...,\psi_K\}$ in distinct ways.
\end{proposition}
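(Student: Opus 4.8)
The plan is to obtain both equalities directly from Eq.~(\ref{eq:theo_interplay}) by converting the matrix inverse $g_\psi(\hat{\mathbf{L}})^{-1}$ into a Neumann series and then re-expanding the outcome as a power series in $\hat{\mathbf{A}}$. First I would rewrite the learned filter as a polynomial in the \emph{original} adjacency matrix. Since $\hat{\mathbf{L}}=\mathbf{I}-\hat{\mathbf{A}}$ and $g_\psi(\lambda)=\sum_{k=0}^{K}\omega_k\lambda^k$, substituting $\lambda\mapsto\mathbf{I}-\hat{\mathbf{A}}$ and expanding each $(\mathbf{I}-\hat{\mathbf{A}})^k$ binomially lets me collect powers of $\hat{\mathbf{A}}$ and write $g_\psi(\hat{\mathbf{L}})=\sum_{k=0}^{K}\pi_k\hat{\mathbf{A}}^k$, where each $\pi_k$ is a fixed linear combination of the $\omega_k$ (hence of the $\psi_k$). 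This is a routine change of variable and pins down the constants $\pi_k$ named in the statement.

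Next I would apply Lemma~\ref{lem:Neumann} to the matrix $\mathbf{M}:=\mathbf{I}-g_\psi(\hat{\mathbf{L}})=\mathbf{I}-\sum_{k=0}^{K}\pi_k\hat{\mathbf{A}}^k$. The crucial check is the spectral-radius condition: the eigenvalues of $\mathbf{M}$ are $1-g_\psi(\lambda)$ as $\lambda$ ranges over the spectrum of $\hat{\mathbf{L}}$, and the non-negative constraint $0<g_\psi(\lambda)\le 1$ established in Section~\ref{sec:interplay} forces $1-g_\psi(\lambda)\in[0,1)$, so $|\lambda_n(\mathbf{M})|<1$ and Lemma~\ref{lem:Neumann} is applicable. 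This yields $g_\psi(\hat{\mathbf{L}})^{-1}=(\mathbf{I}-\mathbf{M})^{-1}=\sum_{t=0}^{\infty}\mathbf{M}^t$. Substituting into Eq.~(\ref{eq:theo_interplay}) and peeling off the $t=0$ term $\mathbf{M}^0=\mathbf{I}$, which exactly cancels the $-\mathbf{I}$ inside the parentheses, gives the first equality $\hat{\mathbf{A}}^{\text{new}}=\mathbf{I}-\frac{\alpha}{1-\alpha}\sum_{t=1}^{\infty}\mathbf{M}^t=\mathbf{I}-\frac{\alpha}{1-\alpha}\sum_{t=1}^{\infty}\big(\mathbf{I}-\sum_{k=0}^{K}\pi_k\hat{\mathbf{A}}^k\big)^t$.

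For the second equality I would expand every $\mathbf{M}^t$ as a polynomial of degree $Kt$ in $\hat{\mathbf{A}}$ and regroup the whole double sum by powers of $\hat{\mathbf{A}}$ (a Cauchy-product style reindexing), absorbing the leading $\mathbf{I}$ into the $t=0$ coefficient, to arrive at $\sum_{t=0}^{\infty}\phi_t\hat{\mathbf{A}}^t$ with the $\phi_t$ forming a \emph{different} linear combination of the $\psi_k$ than the $\pi_k$. I expect the main obstacle to be justifying this rearrangement and its convergence, since it reorders a doubly-indexed family of matrix terms. I would resolve it through the spectral functional calculus of the symmetric matrix $\hat{\mathbf{A}}$: every operator above acts eigenvalue-wise, so it suffices to control the scalar map $a\mapsto 1-\frac{\alpha}{1-\alpha}\big(g_\psi(1-a)^{-1}-1\big)$ on the eigenvalues $a\in[-1,1]$ of $\hat{\mathbf{A}}$. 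Because $g_\psi(1-a)>0$ there, this scalar function is analytic and free of poles on the spectrum, so its Taylor coefficients $\phi_t$ are well defined and the regrouped series reproduces it; applied eigenvalue-wise, this lifts to the claimed matrix identity, with convergence inherited from $\rho(\mathbf{M})<1$. Finally, non-locality follows because $\hat{\mathbf{A}}^{\text{new}}$ is a genuinely non-polynomial (rational) function of $\hat{\mathbf{A}}$, so $\phi_t\neq 0$ for arbitrarily large $t$, meaning $\hat{\mathbf{A}}^{\text{new}}$ carries nonzero weight between nodes separated by arbitrarily many hops in the original graph.
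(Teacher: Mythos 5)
Your proposal follows essentially the same route as the paper's proof: both check the spectral condition $|\lambda_n(\mathbf{M})|<1$ for $\mathbf{M}=\mathbf{I}-g_\psi(\hat{\mathbf{L}})$ using the non-negative constraint, invoke Lemma~\ref{lem:Neumann} to write $g_\psi(\hat{\mathbf{L}})^{-1}=\sum_{t\ge 0}\mathbf{M}^t$, re-express $g_\psi(\hat{\mathbf{L}})=\sum_{k=0}^K\pi_k\hat{\mathbf{A}}^k$ via $\hat{\mathbf{L}}=\mathbf{I}-\hat{\mathbf{A}}$, substitute into Eq.~(\ref{eq:theo_interplay}), and peel off the $t=0$ term. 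Your version of these steps is correct and, if anything, more explicit than the paper's about why the eigenvalues of $\mathbf{M}$ lie in $[0,1)$.

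The one place you go beyond the paper is also the one place your argument does not hold up. For the second equality you rightly observe that regrouping the double sum into $\sum_t\phi_t\hat{\mathbf{A}}^t$ requires justification (the paper asserts this step without comment), but the justification you offer is flawed: analyticity of $a\mapsto 1-\frac{\alpha}{1-\alpha}\bigl(g_\psi(1-a)^{-1}-1\bigr)$ on the \emph{real} interval $[-1,1]$ does not make its Taylor series about $a=0$ converge on that interval. The radius of convergence is the distance from $0$ to the nearest \emph{complex} zero of $a\mapsto g_\psi(1-a)$, and the non-negative constraint only excludes zeros on $[0,2]\subset\mathbb{R}$. Concretely, $g_\psi(\lambda)=(1-\lambda/2)^3+(\lambda/2)^3$ is a Bernstein filter with non-negative coefficients satisfying $0<g_\psi(\lambda)\le 1$ on $[0,2]$, yet its zeros lie at $\lambda=1\pm i/\sqrt{3}$, so the composed function has poles at $|a|=1/\sqrt{3}<1$ and the regrouped series diverges at the eigenvalue $a=1$ of $\hat{\mathbf{A}}$ (the one associated with $\lambda=0$ of $\hat{\mathbf{L}}$, which always exists). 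Your phrase ``convergence inherited from $\rho(\mathbf{M})<1$'' conflates two different series: the Neumann series in powers of $\mathbf{M}$, which does converge, and the rearranged series in powers of $\hat{\mathbf{A}}$, which need not. To be fair, the paper's own proof silently suffers from the same issue, so your attempt is no weaker than the published argument; but a genuine repair would need an extra hypothesis (e.g., that $g_\psi(1-\cdot)$ has no zeros in the closed unit disk) or a statement of the expansion as a formal identity only. Note also that your closing non-locality argument ($\phi_t\neq 0$ for arbitrarily large $t$ because the function is rational and non-polynomial) presupposes exactly the convergent expansion whose validity is in question.
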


\begin{proof}
    We begin with the assertion that the eigenvalues of $\mathbf{I} - g_\psi(\hat{\mathbf{L}})$ are positive and strictly less than 1, which fulfills the necessary condition for the Neumann series expansion stated in Lemma~\ref{lem:Neumann}. As such, we can deduce $g_\psi(\hat{\mathbf{L}})^{-1} = (\mathbf{I} - (\mathbf{I} - g_\psi(\hat{\mathbf{L}})))^{-1} = \sum_{t=0}^\infty (\mathbf{I} - g_\psi(\hat{\mathbf{L}}))^t$. Owning to the prevalent polynomial approximation, we are eligible to express $g_\psi (\hat{\mathbf{L}})$ w.r.t. adjacency matrix $\hat{\mathbf{A}}$, i.e., $g_\psi (\hat{\mathbf{L}}) = g_\psi (\mathbf{I} - \hat{\mathbf{A}}) = \sum^K_{k=0} \pi_k \hat{\mathbf{A}}^k$ where $\pi_k$ refers to the new coefficients made of up $\{\psi_m\}^K_{m=0}$. Substituting this polynomial representation into our Neumann expansion, we obtain $g_\psi(\hat{\mathbf{L}})^{-1} = \sum_{n=0}^\infty (\mathbf{I} - \sum^K_{k=0} \pi_k \hat{\mathbf{A}}^k)^t$. Now, revisiting  $\hat{\mathbf{A}}^{\text{new}}$ in Eq.~(\ref{eq:theo_interplay}), we have 
    $\hat{\mathbf{A}}^{\text{new}} 
    = \mathbf{I} - \frac{\alpha}{1-\alpha}(g_\psi(\hat{\mathbf{L}})^{-1} - \mathbf{I})
    = \mathbf{I} - \frac{\alpha}{1-\alpha} \sum_{t=1}^\infty (\mathbf{I} - 
\sum^K_{k=0} \pi_k \hat{\mathbf{A}}^k)^t
= \sum_{t=0}^\infty \phi_t \hat{\mathbf{A}}^t$ where $\phi_t$ is a constant coefficient made up of $\{\pi_m\}^K_{m=0}$. 
\end{proof}

\noindent This proposition implies that the new graph engenders immediate links between nodes that originally necessitate multiple hops for connection. To further underpin this theoretical claim, we analyze the general connection status on the new graph by BernNet~\cite{he2021bernnet}, a spectral GNN adhering to the non-negative constraint. From Fig.~\ref{fig:nonLoc}, it is apparent that nodes originally separated by multiple hops achieve direct connections in the new graph.

\subsubsection{Signed Edge Weights --- Discerning Label Consistency}
Upon further scrutinizing the adapted new graph, we make a notable discovery that it readily accommodates both positive and negative edge weights. A more granular analysis in Fig.~\ref{fig:discern_labelConsis} reveals that a considerable portion of positive edge weights are assigned to the same-class node pairs, enhancing graph homophily (exemplified by edge homophily ratio~\cite{zhu2020beyond}). Conversely, edges parameterized with negative weights tend to bridge nodes with different labels. These findings demonstrate the newfound graph's adeptness in discerning label consistency among nodes. To theoretically explain this phenomenon, we further present the proposition below:
\begin{proposition}\label{prop:signedEdges}
    Let $\mathbf{Z}^*$ be the node representations optimized by Eq.~(\ref{eq:generalized_opt}). For $\mathbf{Z}^*$ to be effective in label prediction, it is a necessary condition that 
    $\hat{\mathbf{A}}^{\text{new}}$ accommodates both positive and negative edge weights s.t. for any node pairs $v_i, v_j \in \mathcal{V}$,
    $\hat{A}^{\text{new}}_{i,j} > 0$ if $\mathbf{y}_i=\mathbf{y}_j$ and $\hat{A}^{\text{new}}_{i,j} < 0$ if $\mathbf{y}_i \neq \mathbf{y}_j$.
\end{proposition}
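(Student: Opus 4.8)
The plan is to analyze the regularization term of the generalized graph optimization problem in Eq.~(\ref{eq:generalized_opt}) as a signed quadratic form in $\hat{\mathbf{A}}^{\text{new}}$, and to read off the required sign pattern of the edges from the geometry that an effective $\mathbf{Z}^*$ must exhibit. The starting point is the substitution $\gamma_\theta(\hat{\mathbf{L}}) = \mathbf{I} - \hat{\mathbf{A}}^{\text{new}}$ from Eq.~(\ref{eq:iter_sol}), which turns the smoothness penalty into $\mathrm{tr}\big((\mathbf{Z}^*)^T(\mathbf{I}-\hat{\mathbf{A}}^{\text{new}})\mathbf{Z}^*\big)$. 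I would expand this trace into per-node and per-edge contributions, isolating for each pair $(i,j)$ the term proportional to $-\hat{A}^{\text{new}}_{i,j}\langle \mathbf{z}_i^*,\mathbf{z}_j^*\rangle$; equivalently, after accounting for the diagonal, a signed Dirichlet energy whose positive-weight edges penalize $\|\mathbf{z}_i^*-\mathbf{z}_j^*\|$ and whose negative-weight edges penalize $\|\mathbf{z}_i^*+\mathbf{z}_j^*\|$.

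First I would formalize ``effective in label prediction'' as the requirement that the optimized representations separate the classes, i.e.\ same-label nodes receive aligned (similar) representations while different-label nodes receive contrasted (dissimilar) ones, so that a downstream classifier can distinguish them. Since $\mathbf{Z}^*$ is the minimizer of $\mathcal{L}$ and the regularizer enters with the positive coefficient $(1-\alpha)$, minimizing the edge term $-\hat{A}^{\text{new}}_{i,j}\langle\mathbf{z}_i^*,\mathbf{z}_j^*\rangle$ forces $\langle\mathbf{z}_i^*,\mathbf{z}_j^*\rangle$ to be large and positive when $\hat{A}^{\text{new}}_{i,j}>0$ (alignment) and negative when $\hat{A}^{\text{new}}_{i,j}<0$ (contrast). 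Matching these two tendencies against the effectiveness requirement yields the claim: alignment of same-class representations demands $\hat{A}^{\text{new}}_{i,j}>0$ whenever $\mathbf{y}_i=\mathbf{y}_j$, and contrast of different-class representations demands $\hat{A}^{\text{new}}_{i,j}<0$ whenever $\mathbf{y}_i\neq\mathbf{y}_j$; since real graphs contain both kinds of pairs, both signs must appear.

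To argue necessity rather than mere consistency, I would run the contrapositive: suppose an edge violates the sign rule, e.g.\ $\hat{A}^{\text{new}}_{i,j}>0$ while $\mathbf{y}_i\neq\mathbf{y}_j$. Then the smoothness term actively rewards making $\mathbf{z}_i^*$ and $\mathbf{z}_j^*$ similar, pulling two differently labeled nodes together in representation space and eroding class separability; symmetrically, a negative weight between same-class nodes pushes their representations apart. In either case the minimizer of $\mathcal{L}$ is steered toward a geometry inconsistent with the labels, so $\mathbf{Z}^*$ cannot be effective for prediction, which establishes the stated sign conditions as necessary.

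I expect the main obstacle to be disentangling the two competing terms in $\mathcal{L}$: the data-fidelity term $\alpha\|\mathbf{X}-\mathbf{Z}\|_F^2$ can partially compensate for a mis-signed edge, so the necessity argument must be made in the regime where the regularizer governs the relative placement of $\mathbf{z}_i^*$ and $\mathbf{z}_j^*$ (or, equivalently, argued against an uninformative choice of features together with an idealized label target). A secondary technical point is the diagonal and row-sum structure of $\hat{\mathbf{A}}^{\text{new}}$, which must be handled carefully so that the trace decomposes cleanly into the pairwise signed-energy form that drives the whole argument; the inner-product form $-\hat{A}^{\text{new}}_{i,j}\langle\mathbf{z}_i^*,\mathbf{z}_j^*\rangle$ is the one that holds without assuming constant row sums, so I would anchor the reasoning there.
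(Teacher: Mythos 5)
Your proposal takes essentially the same route as the paper's proof: substitute $\gamma_\theta(\hat{\mathbf{L}}) = \mathbf{I} - \hat{\mathbf{A}}^{\text{new}}$, expand the trace penalty into pairwise signed-energy terms, and argue by contradiction/contrapositive that a mis-signed edge steers the minimizer toward a geometry inconsistent with the labels (pulling different-class nodes together, pushing same-class nodes apart). The only notable difference is technical rather than conceptual: the paper expands the term as the signed Dirichlet energy $\sum_{i,j}\hat{A}^{\text{new}}_{i,j}\|\mathbf{Z}_i-\mathbf{Z}_j\|^2_2$ and handles the degree/row-sum issue by rescaling $\mathbf{Z}$ to $\bar{\mathbf{Z}}$, whereas you anchor on the inner-product form $-\hat{A}^{\text{new}}_{i,j}\langle\mathbf{z}_i^*,\mathbf{z}_j^*\rangle$, which sidesteps that issue but reaches the same sign-based conclusion.
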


\begin{proof}
Let us commence the proof by contradiction. Let $\mathcal{C}$ denote the condition described in proposition~\ref{prop:signedEdges}. Assume, for the sake of contradiction, that $\mathcal{C}$ is not requisite for the optimal node representations $\mathbf{Z}^*$ to be predictive of node labels. Under this assumption, there are node pairs $v_i, v_j \in \mathcal{V}$ such that: (1) if $\mathbf{y}_i = \mathbf{y}_j$, $\hat{A}^{\text{new}}_{i,j} < 0$; (2) if $\mathbf{y}_i \neq \mathbf{y}_j$, $\hat{A}^{\text{new}}_{i,j} > 0$.
Without loss of generality, given the non-locality as proved in proposition~\ref{prop:nonloc}, we exclude cases where $\hat{A}^{\text{new}}_{i,j} = 0$ from our consideration. 
Now, consider the second objective term $\mathrm{tr}(\mathbf{Z}^T \gamma_{\theta}(\hat{\mathbf{L}}) \mathbf{Z})$ in Eq.~(\ref{eq:generalized_opt}). Using the relationship $\gamma_\theta(\hat{\mathbf{L}}) = \mathbf{I} - \hat{\mathbf{A}}^{\text{new}}$, 
we can expand this term into
$
\sum_{v_i,v_j \in \mathcal{V}} \hat{A}^{\text{new}}_{i,j} \|\mathbf{Z}_i - \mathbf{Z}_j\|^2_2 
$.
Under (1), for same-class nodes $v_i, v_j$ with $\hat{A}^{\text{new}}_{i,j} < 0$, minimizing the objective term pulls $\mathbf{Z}_i$ and $\mathbf{Z}_j$ apart in the latent space. This behavior violates the canonical understanding that nodes from the same class should exhibit similar representations. Under (2), for different-class nodes $v_i, v_j$ with $\hat{A}^{\text{new}}_{i,j} > 0$, the optimization encourages $\mathbf{Z}_i$ and $\mathbf{Z}_j$ to be more similar. This is in direct opposition to the basic classification principle that nodes from different classes should have distinct representations. Given these contradictions stemming from the mathematical implications in optimization, we must reject assumptions (1) and (2), affirming the necessary condition $\mathcal{C}$ 
for accurate label prediction by $\mathbf{Z}^*$.
\end{proof}

\noindent Proposition~\ref{prop:signedEdges} provides a theoretical foundation of our empirical findings on the new graph. 
The essence lies in the objective in Eq.~(\ref{eq:generalized_opt}), particularly the trace term $\mathrm{tr}(\mathbf{Z}^T\gamma_{\theta}(\hat{\mathbf{L}}) \mathbf{Z})$.
For clarity, let us reinterpret this trace term 
as $\mathrm{tr}(\bar{\mathbf{Z}}^T (\mathbf{D}^{\text{new}} - \mathbf{A}^{\text{new}}) \bar{\mathbf{Z}})$, where $\mathbf{D}^{\text{new}}$ denotes the related degree matrix and $\bar{\mathbf{Z}}$ is derived from rescaling $\mathbf{Z}$. 
Clearly, this term evaluates label smoothness among adjacent nodes in the new graph, which, given its non-local nature, includes both intra-class ($=$) and inter-class ($\neq$) node connections such that $\mathbf{A}^{\text{new}} = \mathbf{A}^{\text{new}}_{=} + \mathbf{A}^{\text{new}}_{\neq}$.
Drawing from proposition~\ref{prop:signedEdges}, we can further dissect the original trace term, splitting it into $\mathrm{tr}(\bar{\mathbf{Z}}^T (\mathbf{D}^{\text{new}}_{=} - \mathbf{A}^{\text{new}}_{=}) \bar{\mathbf{Z}}) - \mathrm{tr}(\bar{\mathbf{Z}}^T | (\mathbf{D}^{\text{new}}_{\neq} -\mathbf{A}^{\text{new}}_{\neq}) | \bar{\mathbf{Z}})$ where the $|\cdot|$ operation denotes absolute values. 
As such, it becomes evident that minimizing this trace term not only enhances the representational proximity for same-class node pairs but also strengthens the distinctiveness for different-class nodes pairs. 
Such nuanced behaviors, inherent to the optimization in Eq.~(\ref{eq:generalized_opt}), are necessary for GNN models to achieve accurate label predictions.

\subsubsection{Summarization}
Our investigation into spectral GNNs in the spatial domain reveals that 
they
fundamentally alters the original graph, 
imbuing it with non-locality and signed edge weights that capture label relationships among nodes.
These findings highlight the interpretable role of spectral GNNs in the spatial domain and prompt us to rethink current models beyond the truncated polynomial filters.

\begin{figure}[t]
\centering
\includegraphics[clip, trim=0cm 6.6cm 5.7cm 0cm,width=0.48\textwidth]{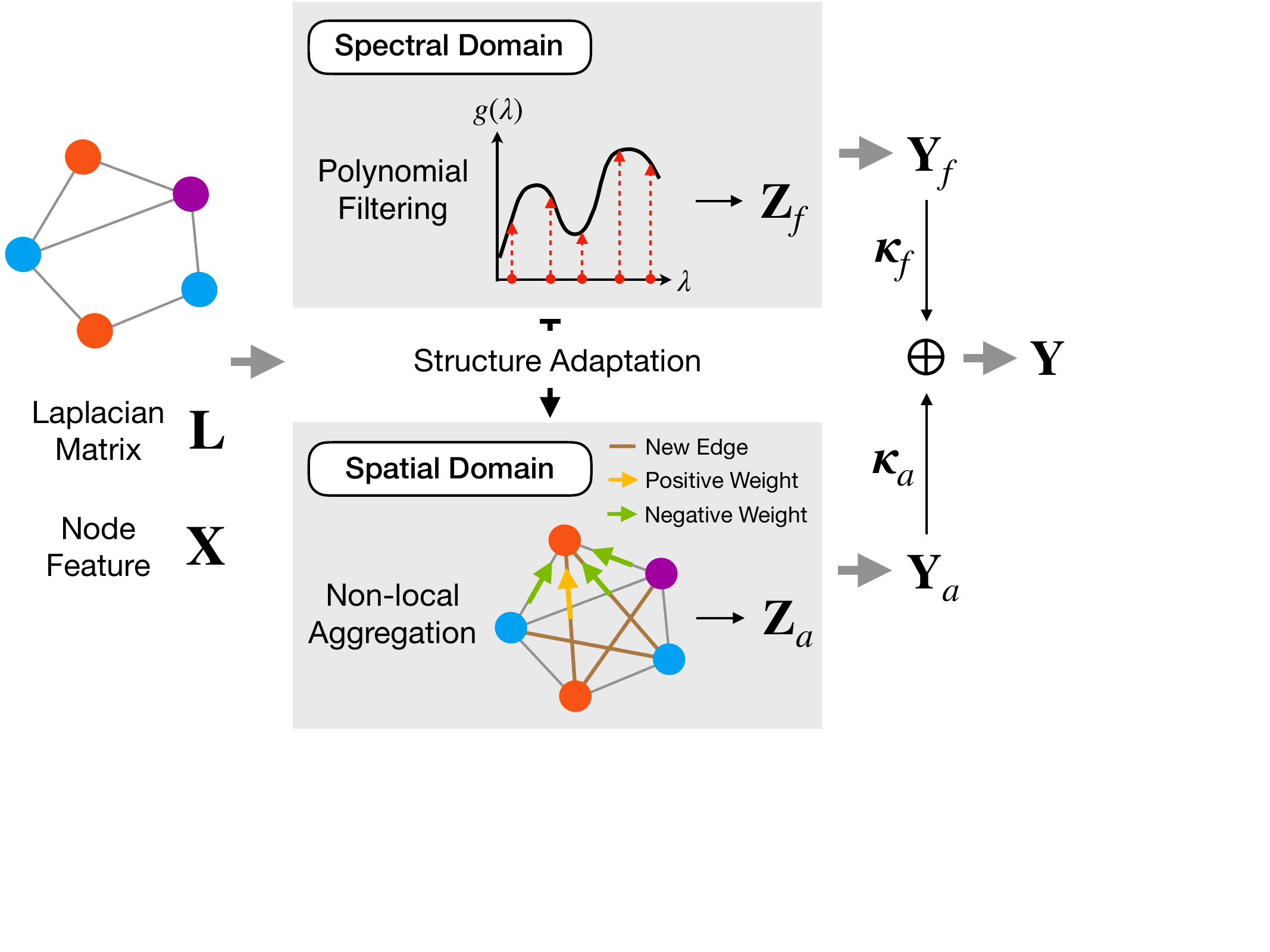}
\caption{Illustration of SAF framework where varying node colors represent different node labels. SAF leverages the adapted new graph by spectral filtering for auxiliary non-local aggregation in the spatial domain and allows individual nodes to flexibly balance between spectral and spatial features. 
}
\label{fig:saf_pipeline}
\end{figure}

\section{Spatially Adaptive Filtering Framework}\label{sec:saf}
Building on our discoveries, we re-evaluate the state-of-the-art spectral GNNs and put forth a paradigm-shifting framework, Spatially Adaptive Filtering (SAF), for joint exploitation of graph-structured data across both spectral and spatial domains.
SAF leverages the adapted new graph by spectral filtering for an auxiliary non-local aggregation, addressing enduring challenges in GNNs related to long-range dependencies and graph heterophily. See the overall pipeline in Fig.~\ref{fig:saf_pipeline}.

\subsubsection{Non-negative Spectral Filtering}
The proposed SAF requires explicit computation of the newfound graph, as outlined in Eq.~(\ref{eq:theo_interplay}). 
This further necessitates the graph filter \(g_\psi: \left[0, 2\right] \to \mathbb{R}\) to satisfy the non-negative constraint from Eq.~(\ref{eq:generalized_opt}): $0 \leq g_\psi (\lambda) \leq 1$.
However, not all extant graph filters fulfill this prerequisite. For instance, the filter use by GCN~\cite{kipf2017semi}, $g_\psi (\lambda) = 1 - \lambda$, takes negative values when $\lambda > 1$. 
In this research, we approximate the graph filter using Bernstein polynomials~\cite{farouki2012bernstein}, which are known for their non-negative traits~\cite{powers2000polynomials} and are essential in a preeminent spectral GNN, BernNet~\cite{he2021bernnet}.
For $g_\psi (\lambda) \leq 1$ part, we rescale Bernstein polynomials with the following proposition.
\begin{proposition}\label{prop:max_bern}
Let $B_{k,K}(x)$ denote the Bernstein polynomial basis of index $k$ and order $K$, which is defined as $B_{k,K}(x) = \binom{K}{k} (1-x)^{K-k} x^k$ for $x \in [0, 1]$. Let $\psi_k$ denote the $k$-th coefficient of a polynomial $p(x)$ of order $K$, where $p(x) = \sum_{k=0}^{K} \psi_k B_{k,K}(x)$ with $\psi_k \geq 0$ for all $k$. Then for all $x \in [0, 1]$, we have $g_\psi(x) \leq \max \{\psi_k\}_{k=0}^K$.
\end{proposition}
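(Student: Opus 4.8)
The plan is to exploit two classical properties of the Bernstein basis, namely its pointwise non-negativity on $[0,1]$ and the partition-of-unity identity $\sum_{k=0}^{K} B_{k,K}(x) = 1$. Together these recast $g_\psi(x) = p(x)$ as a convex-combination-like quantity that is immediately bounded by its largest coefficient, so the whole argument reduces to assembling these two facts.

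First I would establish non-negativity of each basis element. For $x \in [0,1]$ every factor of $B_{k,K}(x) = \binom{K}{k}(1-x)^{K-k} x^k$ is non-negative: the binomial coefficient is a positive integer, $x^k \geq 0$, and $(1-x)^{K-k} \geq 0$ because $1-x \in [0,1]$. Hence $B_{k,K}(x) \geq 0$ for every index $k$ and every $x \in [0,1]$. Next I would invoke the binomial theorem to obtain the partition of unity, writing $1 = 1^K = \bigl(x + (1-x)\bigr)^K = \sum_{k=0}^{K} \binom{K}{k} x^k (1-x)^{K-k} = \sum_{k=0}^{K} B_{k,K}(x)$, an identity valid for all $x \in [0,1]$.

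With these in hand, the final step is a coefficient-wise comparison. Setting $M = \max\{\psi_k\}_{k=0}^{K}$ and using $\psi_k \leq M$ together with $B_{k,K}(x) \geq 0$, I would bound $g_\psi(x) = \sum_{k=0}^{K} \psi_k B_{k,K}(x) \leq M \sum_{k=0}^{K} B_{k,K}(x) = M$, where the last equality is the partition of unity. The hypothesis $\psi_k \geq 0$ simultaneously yields $g_\psi(x) \geq 0$, so the filter is confined to $[0, M]$, exactly the form required to enforce the non-negative constraint once $M$ is normalized to $1$.

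Candidly, there is no serious obstacle here: the result is essentially immediate once the partition-of-unity identity is recognized. The only point that genuinely demands care is the inequality $\sum_k \psi_k B_{k,K}(x) \leq M \sum_k B_{k,K}(x)$, which silently relies on the basis being non-negative on the entire interval; without pairing the assumption $\psi_k \geq 0$ with $B_{k,K} \geq 0$, the coefficient-wise replacement of $\psi_k$ by $M$ would not preserve the inequality. I would therefore make the non-negativity of the basis explicit before invoking it, rather than treat it as folklore.
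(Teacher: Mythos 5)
Your proof is correct and follows essentially the same route as the paper's: bound each coefficient $\psi_k$ by $\psi_{\max}$ using the non-negativity of the Bernstein basis on $[0,1]$, then collapse the remaining sum via the binomial identity $\sum_{k=0}^{K}\binom{K}{k}(1-x)^{K-k}x^k=(1-x+x)^K=1$. The only difference is that you make the basis non-negativity and the partition-of-unity step explicit, which the paper uses implicitly; this is a presentational refinement, not a different argument.
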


\begin{proof}
We denote $p(x) = \sum_{k=0}^K \psi_k \binom{K}{k} (1-x)^{K-k} x^k$ as a Bernstein polynomial with $\psi_k \geq 0$ for all $k$ and $\psi_\text{max}=\max \{\psi_k\}_{k=0}^K$. Given $x \in [0, 1]$, we can derive the following inequality as
\begin{align*}
p(x) &= \sum_{k=0}^K \psi_k \binom{K}{k} (1-x)^{K-k} x^k \\
&\leq \psi_\text{max} \sum_{k=0}^K  \binom{K}{k} (1-x)^{K-k} x^k \\
&= \psi_\text{max} (1-x+x)^K = \psi_\text{max}.
\end{align*}
Therefore, we have $p(x) \leq \max \{\psi_k\}_{k=0}^K$ for all $x \in [0, 1]$.
\end{proof}

\noindent Proposition~\ref{prop:max_bern} suggests that the Bernstein polynomial function attains its maximum value in $\psi_\text{max} = \max \{\psi_k\}_{k=0}^K$.
Therefore, $g_\psi(\lambda)$ can be rescaled within $[0, 1]$ by $\hat{g}_\psi(\lambda) = \frac{1}{\psi_\text{max}}\sum_{k=0}^K \psi_k B_{k,K}(\frac{\lambda}{2})$, which enables us to formulate the spectral filtering in SAF as
$\mathbf{Z}_f = \hat{g}_\psi(\hat{\mathbf{L}}) f_\varphi(\mathbf{X})
 = \frac{1}{\psi_\text{max}}
 \sum_{k=0}^K \psi_k \frac{1}{2^K} \binom{K}{k} (2\mathbf{I}-\hat{\mathbf{L}})^{K-k} \hat{\mathbf{L}}^k f_\varphi(\mathbf{X})$
where $f_\varphi(\cdot)$, a two-layer MLP, maps $\mathbf{X}$ from $F$ to $C$ dimensions using 64 hidden units, and $\{\psi_k\}_{k=1}^K$ are non-negative learnable parameters. 
Note that SAF also permits alternative implementations such as using Chebyshev polynomials~\cite{hammond2011wavelets,defferrard2016convolutional} for graph filter learning, enhancing models like ChebNetII~\cite{he2022chebnetii} (see details in Section~\ref{apdix:chebNetAsBase}).

\subsubsection{Non-local Spatial Aggregation}
Once acquiring a suitable spectral filter $\hat{g}_\psi(\lambda)$, we compute the adapted new graph as 
$\hat{\mathbf{A}}^{\text{new}} = \mathbf{I} - \tau (
\mathbf{U_m} g_\psi(\mathbf{\Lambda_m})^{-1} \mathbf{U_m}^T
- \mathbf{I})$ by Eq.~(\ref{eq:theo_interplay}) where $\tau = \frac{\alpha}{1-\alpha}$ is a scaling parameter and a partial eigendecomposition~\cite{lanczos1950iteration} can be employed to produce a low-rank, robust structure for $\hat{\mathbf{A}}^{\text{new}}$ using only $m$ extremal eigenvalues.
Equipped with this newfound graph, we proceed to perform non-local aggregation:
$\mathbf{Z}^{(l)} = (1-\eta) \mathbf{Z}^{(0)} + \eta  \hat{\mathbf{A}}^{\text{new}} \mathbf{Z}^{(l-1)}$, where $\eta$ refers to the update rate and $\mathbf{Z}^{(0)}=f_\varphi(\mathbf{X})$ and $l=1,2,\cdots,L$ denote layer number. 
The iteratively aggregated results are denoted as $\mathbf{Z}_a$. Recognizing the potential noise from the non-local nature of $\hat{\mathbf{A}}^{\text{new}}$, we apply a sparsification technique, leveraging a positive threshold $\epsilon$, and retain only essential elements outside the $\left[-\epsilon, \epsilon \right]$ interval. 
For clarity, this refined model is referred to as SAF-$\epsilon$.

\subsubsection{Node-wise Prediction Amalgamation}
To leverage information from different graph domains, 
we
employ an attention mechanism, allowing nodes to determine the importance of each space. 
This mechanism 
produces pairwise weights for a nuanced amalgamation during 
prediction.
Specifically, the weight pair is computed as
$\boldsymbol{\kappa}_f = \text{Sigmoid} (\mathcal{P}_f (\mathbf{Z}_f))$, $\boldsymbol{\kappa}_a = \text{Sigmoid} (\mathcal{P}_a (\mathbf{Z}_a))$
where $\boldsymbol{\kappa}_f, \boldsymbol{\kappa}_a \in \mathbb{R}^N$ contain the weights for each node, and $\mathcal{P}_f(\cdot)$ and $\mathcal{P}_a(\cdot)$ are two different mappings from $\mathbb{R}^C$ to $\mathbb{R}$. For simplicity, we implement them using two one-layer MLPs. Given domain predictions $\mathbf{Y}_f, \mathbf{Y}_a \in \mathbb{R}^C$, the final model prediction is attained as
$
\mathbf{Y} = \mathrm{diag}(\boldsymbol{\kappa}_f) \cdot \mathbf{Y}_f + \mathrm{diag}(\boldsymbol{\kappa}_a) \cdot \mathbf{Y}_a
$
where a normalization $\left[\boldsymbol{\kappa}_f, \boldsymbol{\kappa}_a\right] \gets \frac{\left[\boldsymbol{\kappa}_f, \boldsymbol{\kappa}_a\right]}{\max \{\| \{\boldsymbol{\kappa}_f, \boldsymbol{\kappa}_a\}\|_1, \delta\}}$ is performed beforehand to maintain $\boldsymbol{\kappa}_f + \boldsymbol{\kappa}_a = \mathbf{1}$ with small value $\delta$ preventing zero division. 
Similar schemes can be founded in works~\cite{sagi2018ensemble,wang2020gcn,zhu2020graph}.

\begin{table}[t]
\centering
\caption{Statistics of real-world datasets. $F$ and $C$ denotes the number of features and classes. $\Delta$ represents graph diameter referring to the longest geodesic distance between nodes on the graph; for Penn94, due to multiple subgraphs, we report $\Delta$ of the largest connected component. We assess graph homophily using three metrics - edge homophily~\cite{zhu2020beyond} $\mathcal{H}$, class homophily~\cite{lim2021large} $\mathcal{H}_\text{class}$, and adjusted homophily~\cite{platonov2023characterizing} $\mathcal{H}_\text{adjusted}$ - which ranges from 0 (high heterophily) to 1 (high homophily). 
}
\label{tab:data_sta}
\setlength\tabcolsep{3pt}
\resizebox{0.48\textwidth}{!}{
\begin{tabular}{lcccccccc}
\toprule
Dataset   &$|\mathcal{V}|$  &$|\mathcal{E}|$ & $F$ &$C$ 
& $\Delta$
& $\mathcal{H}$     & $\mathcal{H}_\text{class}$     & $\mathcal{H}_\text{adjusted}$ \\
\midrule
Chameleon &2,227       &36,101       &2,325          &5         &11   &0.23   & 0.06      &0.03   \\
Squirrel  &5,201       &217,073       &2,089          &5         &10   &0.22   & 0.03      &0.01   \\
Texas     &183       &309       &1,703          &5         &   8    &0.11   &    0.00   &-0.23   \\
Cornell   &183       &295       &1,703          &5         &   8    &0.30   &  0.05     &-0.08   \\
Actor   &7,600      &33,544      &931      &5      &  12 &0.22      &  0.01     &0.00      \\
Cora      &2,708       &5,429       &1,433          &7         &   19  &0.81   &   0.77    &0.77   \\
Citeseer  &3,327       &4,732       &3,703          &6         & 28  &0.74   &    0.63   &0.67   \\
Pubmed    &19,717       &44,338       &500       &3       & 18  &0.80      &0.66   &0.69   \\
\midrule
Minesweeper   &10,000      &39,402      &7      &2      & 99    &0.68      &   0.01    &0.01      \\
Tolokers   &11,758      &519,000      &10      &2      & 11  &0.59      &    0.18   &0.09      \\
Amazon-ratings   &24,492      &93,050      &300      &5      & 46&0.38      &    0.13   &0.14      \\
Roman-empire   &22,662      &32,927      &300      &18      &  6,824    &0.05      &  0.02     &-0.05      \\
Penn94   &41,554    &1,362,229  &5  &2  &8  &0.47   &0.05   &0.02    \\
\bottomrule
\end{tabular}
}
\end{table}

\subsubsection{Complexity Analysis}\label{sec:complexity}
SAF augments spectral GNNs with non-local aggregation and node-wise amalgamation. The first part entails creating a new graph and information propagation.
In SAF-$\epsilon$, these two steps are separated, culminating in $\mathcal{O}(N^3 + N^2 + nnz(\hat{\mathbf{A}}^{\text{new}})d)$ complexity,
where $nnz$ denotes non-zero element count. Conversely, SAF, viewing non-local aggregation holistically, can reduce complexity to $\mathcal{O}(2dN^2 + dN)$ when $d \ll N$. For node-wise amalgamation, its parallelizable nature ensures computational efficiency. Our method also requires eigendecomposition
precomputation, which, though naively complex at $\mathcal{O}(N^3)$, can be reduced to $\mathcal{O}(m^2 + nnz(\hat{\mathbf{L}})m)$ using Lanczos method~\cite{lanczos1950iteration} with $m \ll N$ iterative steps. The results are also reusable for both training and inference.
We present empirical studies on both time and space overheads in Section~\ref{apdix:run_time}.

\subsubsection{Practical Implications of Eigendecomposition}\label{apdix:spec_decomp}
Eigendecomposition breaks down a matrix into its eigenvalues and eigenvectors, offering insights into matrix properties, especially for the graph Laplacian. Despite computational demands, this technique has attracted surging interest in the graph learning community due to its theoretical richness, and it can be practically expedited for larger graphs using Lanczos~\cite{lanczos1950iteration} and Sparse Generalized Eigenvalue~\cite{cai2021note} algorithms. Recent advancements also underscore its value in various applications such as graph positional encoding~\cite{belkin2003laplacian,dwivedi2020benchmarking,wang2022equivariant}, spectral graph convolution~\cite{liao2019lanczosnet}, graph domain adaptation~\cite{you2022graph}, graph robustness~\cite{chang2021not}, graph expressivity~\cite{lim2022sign,lim2023expressive,yan2023trainable,sun2023feature}, and graph transformers~\cite{kreuzer2021rethinking,kim2022pure,rampavsek2022recipe,bo2023specformer}. 
In line with these developments, our SAF also utilizes eigendecomposition to explicitly create a new graph, enabling efficient non-local aggregation with signed weights to tackle long-range dependency and graph heterophily.

\begin{table*}[t]
\caption{Semi-supervised node classification accuracy (\%) $\pm$ 95\% confidence interval.}\label{tab:semi_nc}
\centering
\resizebox{0.98\textwidth}{!}{
\begin{tabular}{lcccccccc}
\toprule
\textbf{Method} & \textbf{Cham.} & \textbf{Squi.} & \textbf{Texas} & \textbf{Corn.} & \textbf{Actor} & \textbf{Cora} & \textbf{Cite.} & \textbf{Pubm.} \\
\midrule
MLP         & 26.36{$\pm$2.85}                             & 21.42{$\pm$1.50}                             & 32.42{$\pm$9.91}                             & 36.53{$\pm$7.92}                             & 29.75{$\pm$0.95}                             & 57.17{$\pm$1.34}                            & 56.75{$\pm$1.55}                             & 70.52{$\pm$2.01}                             \\
GCN         & 38.15{$\pm$3.77}                             & 31.18{$\pm$0.93}                             & 34.68{$\pm$9.07}                             & 32.36{$\pm$8.55}                             & 22.74{$\pm$2.37}                             & 79.19{$\pm$1.37}                            & 69.71{$\pm$1.32}                             & 78.81{$\pm$0.84}                             \\
APPNP & 32.73{$\pm$2.31} & 24.50{$\pm$0.89} & 34.79{$\pm$10.11} & 34.85{$\pm$9.71} & 29.74{$\pm$1.04} & 82.39{$\pm$0.68} & 69.79{$\pm$0.92} & \underline{79.97{$\pm$1.58}} \\
\midrule
ARMA	&37.42{$\pm$1.72}	&24.15{$\pm$0.93}	&39.65{$\pm$8.09}	&28.90{$\pm$10.07}	&27.02{$\pm$2.31}	&79.14{$\pm$1.07}	&69.35{$\pm$1.44}	   &78.31{$\pm$1.33}\\
GPR-GNN     & 33.03{$\pm$1.92}                             & 24.36{$\pm$1.52}                             & 33.98{$\pm$11.90}                            & 38.95{$\pm$12.36}                            & 28.58{$\pm$1.01}                             & 82.37{$\pm$0.91}                            & 69.22{$\pm$1.27}                             & 79.28{$\pm$2.25}                             \\
BernNet     & 27.32{$\pm$4.04}                             & 22.37{$\pm$0.98}                             & 43.01{$\pm$7.45}                             & 39.42{$\pm$9.59}                             & 29.87{$\pm$0.78}                             & 82.17{$\pm$0.86}                            & 69.44{$\pm$0.97}                             & 79.48{$\pm$1.47}                             \\
ChebNetII   & \textbf{43.42{$\pm$3.54}}                             & \textbf{33.96{$\pm$1.22}}                             & 46.58{$\pm$7.68}                             & 42.19{$\pm$11.61}                            & 30.18{$\pm$0.81}                             & 82.42{$\pm$0.64}                            & 69.89{$\pm$1.21}                             & 79.51{$\pm$1.03}                             \\
JacobiConv                              & 36.67{$\pm$1.63}                            & 29.38{$\pm$0.71}                            & 48.50{$\pm$5.90}                            & 43.01{$\pm$11.92}                           & 31.69{$\pm$0.71}                            & \underline{82.93{$\pm$0.55}}                           & 70.25{$\pm$1.02}                            & 79.53{$\pm$1.28}                            \\
Specformer  	&36.05{$\pm$3.47}	&29.64{$\pm$0.88}	&50.00{$\pm$8.33}	&\underline{43.76{$\pm$5.84}}	&31.45{$\pm$0.68}	&81.44{$\pm$0.63}	&66.11{$\pm$0.98}	&78.05{$\pm$1.03}\\

LON-GNN     & 35.17{$\pm$1.85} & 30.25{$\pm$1.04} & 45.38{$\pm$7.92} & 35.32{$\pm$8.09}  & 31.51{$\pm$1.23} & 81.93{$\pm$0.74} & \underline{70.41{$\pm$1.10}} & 79.57{$\pm$1.08} \\
OptBasisGNN & 35.56{$\pm$2.86} & 31.25{$\pm$1.06} & 37.11{$\pm$5.09} & 32.31{$\pm$7.11}  & 31.73{$\pm$0.50} & 78.69{$\pm$0.86} & 63.46{$\pm$1.30} & 77.38{$\pm$0.98} \\
FLODE	&40.20$\pm$1.02      &31.99$\pm$1.05      &\underline{50.29$\pm$4.74}      &42.89$\pm$7.69      &\underline{32.18$\pm$1.10}      &79.90$\pm$1.07      &69.89$\pm$2.03      &77.78$\pm$1.47      \\
\midrule
GNN-LF      & 26.49{$\pm$2.00} & 22.01{$\pm$1.04} & 39.02{$\pm$6.24} & 36.65{$\pm$9.60}  & 28.28{$\pm$0.71} & 81.96{$\pm$0.92} & 69.80{$\pm$1.36} & 79.50{$\pm$1.28} \\
GNN-HF      & 35.57{$\pm$2.26} & 22.36{$\pm$1.26} & 44.80{$\pm$5.67} & 38.79{$\pm$11.62} & 29.15{$\pm$0.78} & 81.15{$\pm$0.78} & 69.68{$\pm$0.73} & 79.10{$\pm$1.19} \\
ADA-UGNN    & 39.39{$\pm$2.02} & 25.65{$\pm$0.49} & 47.86{$\pm$6.65} & 42.89{$\pm$8.09}  & 30.78{$\pm$1.00} & 82.52{$\pm$1.04} & 70.18{$\pm$1.40} & 79.78{$\pm$1.32} \\
FE-GNN      & 38.23{$\pm$1.66} & 31.67{$\pm$1.60} & 47.40{$\pm$5.90} & 41.21{$\pm$8.96}  & 26.20{$\pm$0.76} & 77.00{$\pm$0.74} & 61.24{$\pm$1.26} & 75.63{$\pm$1.33} \\
\midrule
SAF	&41.82{$\pm$1.74}	&31.77{$\pm$0.69}	&58.04{$\pm$3.76}	&52.49{$\pm$8.56}	&33.50{$\pm$0.55}	&83.57{$\pm$0.66}	&71.07{$\pm$1.08}	&79.51{$\pm$1.12} \\
SAF-$\epsilon$    & \underline{41.88{$\pm$2.04}}                            & \underline{32.05{$\pm$0.40}}                            & \textbf{58.38{$\pm$3.47}}                            & \textbf{53.41{$\pm$5.55}}                            & \textbf{33.84{$\pm$0.58}}                            & \textbf{83.79{$\pm$0.71}}                           & \textbf{71.30{$\pm$0.93}}                            &             
\textbf{80.16{$\pm$1.25}}
\\
Improv.\footnotemark[3]     & 14.56\%    & 9.68\%     & 15.37\%    & 13.99\%    & 3.97\%     & 1.62\%    & 1.86\%     &0.68\%\\   \bottomrule                          
\end{tabular}}
\end{table*}

\section{Experiments}\label{sec:exp}

\subsection{Datasets and Experimental Setup}

\subsubsection{Datasets}

We conduct experiments on 13 real-world datasets with detailed statistics provided in Table~\ref{tab:data_sta}. In certain compact sections of this paper, we use four-letter abbreviations for dataset names. 
\begin{itemize}
    \item Chameleon and Squirrel are Wikipedia networks collected by~\cite{rozemberczki2021multi} where nodes are web pages connected by mutual links. We utilize the labels from~\cite{pei2020geom}, dividing nodes into five classes based on their average monthly traffic.
    \item Texas and Cornell are webpage datasets from WebKB project, containing nodes from five classes (student, project, course, staff, and faculty) and connected by hyperlinks. This study uses the preprocessed version by~\cite{pei2020geom}.
    \item Actor is an actor co-occurrence network induced from the film-director-actor-writer network~\cite{tang2009social} by~\cite{pei2020geom}. In this network, nodes are actors divided into five classes, and edges represent actor co-occurrence on Wikipedia pages. 
    \item Cora, Citeseer, and Pubmed are citation network benchmarks~\cite{sen2008collective,namata2012query} with nodes as scientific papers and edges as undirected citations. Each node is assigned with a class label by research topic and bag-of-word features.
    \item Minesweeper, Tolokers, Amazon-ratings, and Roman-empire are recently proposed benchmarks by~\cite{platonov2023critical} for specifically evaluating GNN models under heterophily.
    \item Penn94 is a large-scale friendship network~\cite{traud2012social} from the Facebook 100 networks, where nodes denote students labeled by reported genders and posses features such as major, second major/minor, dorm/house, year, and high school. This work uses the version preprocessed by~\cite{lim2021large}.
\end{itemize}

\subsubsection{Baselines}
We compare SAF with 21 models:
(1) MLP; 
(2) Basic GNNs: GCN~\cite{kipf2017semi} and APPNP~\cite{Klicpera2019PredictTP}; 
(3) Spectral GNNs: ARMA~\cite{bianchi2021graph}, GPR-GNN~\cite{chien2021adaptive}, BernNet~\cite{he2021bernnet}, ChebNetII~\cite{he2022chebnetii}, JacobiConv~\cite{Wang2022HowPA}, Specformer~\cite{bo2023specformer}, LON-GNN~\cite{tao2023longnn}, OptBasisGNN~\cite{guo2023OptBasisGNN} and FLODE~\cite{maskey2024fractional}; 
(4) Spatial GNNs: GCNII~\cite{chen2020simple}, PDE-GCN~\cite{eliasof2021pde}, NodeFormer~\cite{wu2022nodeformer}, GloGNN++~\cite{li2022finding} and LRGNN~\cite{liang2024predicting}; 
(5) Unified GNNs: GNN-LF~\cite{zhu2021interpreting}, GNN-HF~\cite{zhu2021interpreting}, 
ADA-UGNN~\cite{ma2021unified} and FE-GNN~\cite{sun2023feature}.

\subsubsection{Implementation Details}
To follow~\cite{he2021bernnet,he2022chebnetii,Wang2022HowPA}, we fix $K = 10$. For each dataset, we perform a grid search to tune the hyper-parameters of all models. With the best hyper-parameters, we train models with Adam optimizer~\cite{kingma2014adam} in 1,000 epochs using early-stopping strategy and a patience of 200 epochs, and report the mean classification accuracies with a 95\% confidence interval on 10 random data splits. 
As~\cite{he2022chebnetii} have made a thorough evaluation and share the same experimental protocol with us, we leverage their results for models: MLP, GCN, APPNP, ARMA, GPR-GNN, BernNet, ChebNetII, GCNII and PDE-GCN on datasets -- Chameleon, Squirrel, Texas, Cornell, Actor, Cora, Citeseer, and Pubmed. For JacobiConv, LON-GNN, and OptBasisGNN, we also report the results from their papers~\cite{Wang2022HowPA,tao2023longnn,guo2023OptBasisGNN}. 
Our codes will be made available if the paper could be accepted.

\subsubsection{Hyper-parameters Setting}\label{apidx:param_searching}
We perform a grid search on the hyper-parameters of all models for each dataset using Optuna~\cite{akiba2019optuna}
To accommodate extensive experiments across diverse datasets in both semi- and full-supervised setting, we define a broad searching space as: learning rate $\text{lr} \sim $ \{1e-3, 5e-3, 1e-2, 5e-2, 0.1\}, weight decay $\text{L}_2 \sim$ \{0.0, 1e-6, 5e-6, 1e-5, 5e-5, 1e-4, 5e-4, 1e-3, 5e-3, 1e-2\}, dropout $\sim$ \{0.0, 0.1, ..., 0.8\} with step 0.1, non-local aggregation step $L \sim $ \{1,2, ...,10\} with step 1, scaling parameter $\tau \sim $ \{0.1, 0.2, ..., 1.0\} with step 0.1,
update rate $\eta \sim \ $ \{0.1, 0.2, ..., 1.0\} with step 0.1, and threshold $\epsilon \sim \ $\{0.0, 1e-5, 5e-5, 1e-4, 5e-4, 1e-3, 5e-3, 1e-2\}. For other parameters specific to different base models, we strictly follow their instructions in the original papers. 

\begin{table*}[t]
\caption{Full-supervised node classification accuracy (\%) $\pm$ 95\% confidence interval.}\label{tab:full_nc}
\centering
\resizebox{0.98\textwidth}{!}{
\begin{tabular}{lcccccccc}
\toprule
\textbf{Method} & \textbf{Cham.} & \textbf{Squi.} & \textbf{Texas} & \textbf{Corn.} & \textbf{Actor} & \textbf{Cora} & \textbf{Cite.} & \textbf{Pubm.} \\
\midrule
MLP         & 46.59{$\pm$1.84}                             & 31.01{$\pm$1.18}                             & 86.81{$\pm$2.24}                             & 84.15{$\pm$3.05}                             & 40.18{$\pm$0.55}                             & 76.89{$\pm$0.97}                            & 76.52{$\pm$0.89}                             & 86.14{$\pm$0.25}                             \\
GCN         & 60.81{$\pm$2.95}                             & 45.87{$\pm$0.88}                             & 76.97{$\pm$3.97}                             & 65.78{$\pm$4.16}                             & 33.26{$\pm$1.15}                             & 87.18{$\pm$1.12}                            & 79.85{$\pm$0.78}                             & 86.79{$\pm$0.31}                             \\
APPNP       & 52.15{$\pm$1.79}                             & 35.71{$\pm$0.78}                             & 90.64{$\pm$1.70}                             & 91.52{$\pm$1.81}                             & 39.76{$\pm$0.49}                             & 88.16{$\pm$0.74}                            & 80.47{$\pm$0.73}                             & 88.13{$\pm$0.33}                             \\
\midrule
ARMA	&60.21{$\pm$1.00}	&36.27{$\pm$0.62}	&83.97{$\pm$3.77}	&85.62{$\pm$2.13}	&37.67{$\pm$0.54}	&87.13{$\pm$0.80}	&80.04{$\pm$0.55}	&86.93{$\pm$0.24}\\
GPR-GNN     & 67.49{$\pm$1.38}                             & 50.43{$\pm$1.89}                             & 92.91{$\pm$1.32}                             & 91.57{$\pm$1.96}                             & 39.91{$\pm$0.62}                             & 88.54{$\pm$0.67}                            & 80.13{$\pm$0.84}                             & 88.46{$\pm$0.31}                             \\
BernNet     & 68.53{$\pm$1.68}                             & 51.39{$\pm$0.92}                             & 92.62{$\pm$1.37}                             & 92.13{$\pm$1.64}                             & 41.71{$\pm$1.12}                             & 88.51{$\pm$0.92}                            & 80.08{$\pm$0.75}                             & 88.51{$\pm$0.39}                             \\
ChebNetII   & 71.37{$\pm$1.01}                             & 57.72{$\pm$0.59}                             & 93.28{$\pm$1.47}                             & 92.30{$\pm$1.48}                             & 41.75{$\pm$1.07}                             & 88.71{$\pm$0.93}                            & 80.53{$\pm$0.79}                             & 88.93{$\pm$0.29}                             \\
JacobiConv                              & 74.20{$\pm$1.03}                             & 57.38{$\pm$1.25}                             & \underline{93.44{$\pm$2.13}}                             & \underline{92.95{$\pm$2.46}}                             & 41.17{$\pm$0.64}                             & 88.98{$\pm$0.46}                            & 80.78{$\pm$0.79}                             & 89.62{$\pm$0.41}                             \\
Specformer     &\underline{75.06{$\pm$1.10}}	&\textbf{65.05{$\pm$0.96}}	&90.33{$\pm$3.12}	&90.00{$\pm$2.79}	&42.55{$\pm$0.67}	&88.85{$\pm$0.46}	&80.68{$\pm$0.90}	&91.25{$\pm$0.31} \\
LON-GNN     & 73.00{$\pm$2.20}  & 60.61{$\pm$1.69}  & 87.54{$\pm$3.45}  & 84.47{$\pm$3.45} & 39.10{$\pm$1.59}  & \underline{89.44{$\pm$1.12}}  & \underline{81.41{$\pm$1.15}}  & 90.98{$\pm$0.64}  \\
OptBasisGNN & 74.26{$\pm$0.74} & 63.62{$\pm$0.76} & 91.15{$\pm$1.97}  & 89.84{$\pm$2.46} & 42.39{$\pm$0.52} & 87.96{$\pm$0.71}     & 80.58{$\pm$0.82}  & 90.30{$\pm$0.19} \\
FLODE	&74.38$\pm$0.92      &63.09$\pm$1.04      &89.34$\pm$1.32      &89.02$\pm$2.95      &\underline{42.75$\pm$1.10}      &88.28$\pm$1.02      &80.66$\pm$1.16      &90.59$\pm$0.55      \\
\midrule
GCNII       & 63.44{$\pm$0.85}                             & 41.96{$\pm$1.02}                             & 80.46{$\pm$5.91}                             & 84.26{$\pm$2.13}                             & 36.89{$\pm$0.95}                             & 88.46{$\pm$0.82}                            & 79.97{$\pm$0.65}                             & 89.94{$\pm$0.31}                             \\
PDE-GCN     & 66.01{$\pm$1.56}                             & 48.73{$\pm$1.06}                             & 93.24{$\pm$2.03}                             & 89.73{$\pm$1.35}                             & 39.76{$\pm$0.74}                             & 88.62{$\pm$1.03}                            & 79.98{$\pm$0.97}                             & 89.92{$\pm$0.38}                             \\
NodeFormer                              & 53.02{$\pm$1.58}                            & 34.25{$\pm$1.96}                            & 87.71{$\pm$2.13}                            & 90.00{$\pm$3.45}                            & 41.74{$\pm$0.61}                            & 86.93{$\pm$1.22}                           & 79.58{$\pm$0.85}                            & \underline{91.27{$\pm$0.39}}                            \\
GloGNN++    & 72.36{$\pm$0.85}                            & 60.60{$\pm$1.04}                            & 91.48{$\pm$1.48}                            & 89.84{$\pm$3.62}                            & 41.87{$\pm$1.02}                            & 87.21{$\pm$0.59}                           & 79.89{$\pm$0.61}                            & 86.89{$\pm$0.33}                            \\
LRGNN	& 75.01$\pm$0.64	& 63.32$\pm$1.27	& 91.80$\pm$2.46	& 90.33$\pm$1.15	& 41.16$\pm$0.91	& 88.77$\pm$0.94	& 79.85$\pm$0.83	& 90.73$\pm$0.47 \\
\midrule
GNN-LF      & 53.74{$\pm$1.29}  & 36.15{$\pm$0.86}  & 76.07{$\pm$2.62} & 78.36{$\pm$2.46} & 38.39{$\pm$0.81}  & 88.51{$\pm$0.89}  & 79.84{$\pm$0.56}  & 89.86{$\pm$0.23}  \\
GNN-HF      & 55.97{$\pm$1.05}  & 35.29{$\pm$0.72}  & 81.15{$\pm$2.62}  & 85.41{$\pm$3.12} & 38.96{$\pm$0.77}  & 88.28{$\pm$0.64}  & 80.04{$\pm$0.93}  & 90.35{$\pm$0.30}  \\
ADA-UGNN    & 61.09{$\pm$1.51}  & 42.02{$\pm$1.26}  & 84.92{$\pm$3.12}  & 83.61{$\pm$3.44} & 41.10{$\pm$0.62}  & 88.74{$\pm$0.85}  & 79.81{$\pm$1.11}  & 90.61{$\pm$0.44}  \\
FE-GNN      & 73.00{$\pm$1.31}  & 63.28{$\pm$0.81}  & 88.03{$\pm$1.80}  & 86.07{$\pm$3.12} & 41.74{$\pm$0.67}  & 89.21{$\pm$0.71} & 80.26{$\pm$1.06}  & 90.80{$\pm$0.30}  \\
\midrule
SAF	&\textbf{75.30{$\pm$0.96}}	&63.63{$\pm$0.81}	&94.10{$\pm$1.48}	&92.95{$\pm$1.97}	&42.93{$\pm$0.79}	&89.80{$\pm$0.69}	&80.61{$\pm$0.81}	&91.49{$\pm$0.29}\\
SAF-$\epsilon$    & 74.84{$\pm$0.99}                           &\underline{64.00{$\pm$0.83}}                             & \textbf{94.75{$\pm$1.64}}                            & \textbf{93.28{$\pm$1.80}}                            & \textbf{42.98{$\pm$0.61}}                            & \textbf{89.87{$\pm$0.51}}                           & \textbf{81.45{$\pm$0.59}}                            & \textbf{91.52{$\pm$0.30}}                            \\
Improv.\footnotemark[3]    & 6.77\%     & 12.61\%    & 2.13\%     & 1.15\%     & 1.27\%     & 1.36\%    & 1.37\%     & 3.01\%                                            \\
\bottomrule       
\end{tabular}}
\end{table*}

\subsection{Semi-supervised Node Classification.}
In this task, we follow the experimental protocol established by~\cite{he2022chebnetii} and compare our models with MLP, two basic GNNs, eight popular polynomial spectral GNNs, and five unified GNNs. For data splitting on homophilic graphs (Cora, Citeseer, and Pubmed), we apply the standard division~\cite{yang2016revisiting} with 20 nodes per class for training, 500 nodes for validation, and 1,000 nodes for testing. On the other five heterophilic graphs, we leverage the sparse splitting~\cite{chien2021adaptive} with 2.5\%/2.5\%/95\% samples respectively for training/validation/testing. The results are reported in Table~\ref{tab:semi_nc}, where the best results are bold and the underlined letters denote the second highest accuracy. We first observe that both SAF and SAF-$\epsilon$ substantially boosts its base model, BernNet, with gains reaching a notable 15.37\%. This impressive enhancement is credited to their capacity to effectively exploit the task-beneficial information, which is implicitly encoded by spectral filtering in the spatial domain. This ability is particularly advantageous in contexts with limited supervision, where it allows effective leveraging of extra prior knowledge during training. 
Generally, our models outperform competitors on all datasets except for Chameleon and Squirrel, where SAF maintains a second-place rank with considerable improvements on BernNet by 14.56\% and 9.68\%. In these cases, ChebNetII initially surpasses our model, yet, with more training samples, our SAF manages to beats it by margins of 3.93\% and 6.28\% (see Table~\ref{tab:full_nc}).
Moreover, SAF-$\epsilon$ averagely delivers better results than SAF by using thresholding sparsity to reduce non-local noise. However, this enhancement also incurs higher computational costs, as illustrated in both Section~\ref{sec:complexity} and Section~\ref{apdix:run_time}.

\subsection{Full-supervised Node Classification.}
To bolster our evaluation, we expand the previously compared baselines to include five cutting-edge spatial GNN models: GCNII \& PDE-GCN capturing long-range dependency, and NodeFormer, GloGNN++ \& LRGNN, which go further by not only capturing long-range dependencies but also effectively addressing graph heterophily.
For all datasets, we randomly divide them into 60\%/20\%/20\% for training/validation/testing by following~\cite{he2021bernnet,he2022chebnetii}. Table~\ref{tab:full_nc} summarizes the mean classification accuracies. Our methods demonstrate superior performance across most datasets, with an exception on Squirrel where they achieve comparable results to Specformer. 
This notable performance is primarily attributed to our SAF's effective non-local aggregation, utilizing signed edge weights to model global label relationships. This enables our methods to outperform GNNs that are specifically tailored for long-range dependency and/or graph heterophily.

\footnotetext[3]{Improv. indicates the relative improvement of our SAF over its base model, BernNet~\cite{he2021bernnet}. For alternative implementation using ChebNetII~\cite{he2022chebnetii} as backbone, please refer to Section~\ref{apdix:chebNetAsBase}.}

\begin{figure}[t]
\centering
\subfloat[Squirrel]{\includegraphics[width=0.24\textwidth]{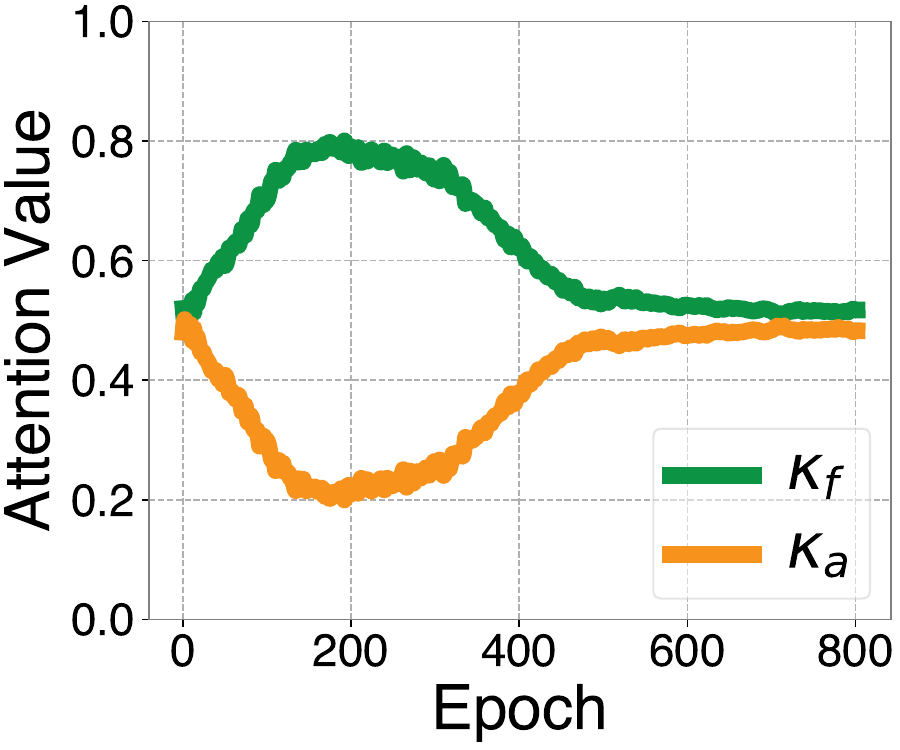}}
\subfloat[Cora]{\includegraphics[width=0.24\textwidth]{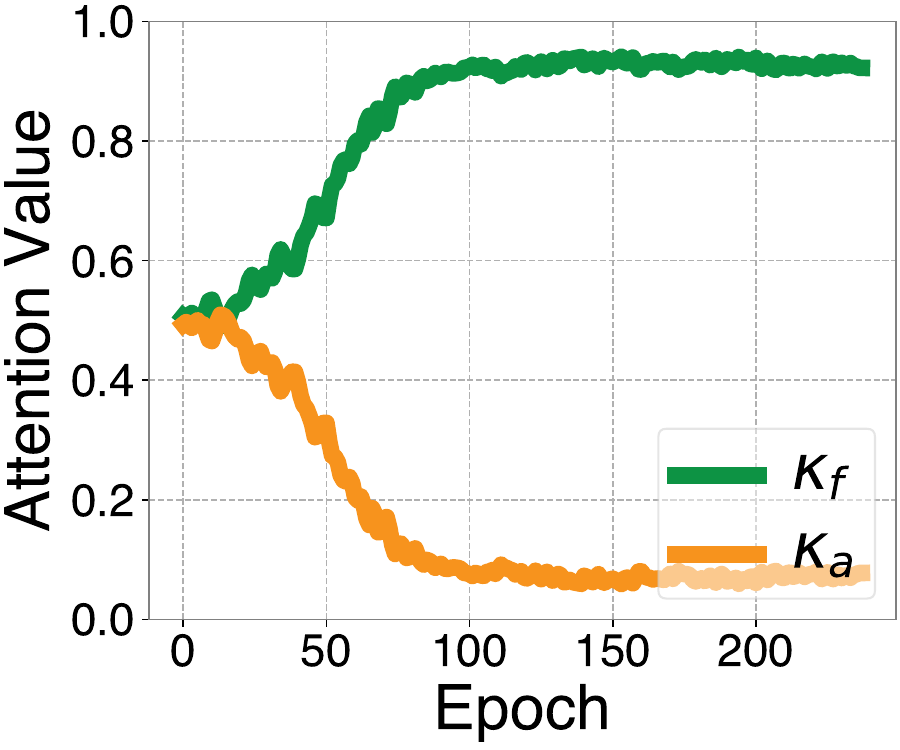}}
\caption{Attention changing trends w.r.t. training epochs.}
\label{fig:additional_results} 
\end{figure}

\subsection{Analysis of Attention Trends}
We analyze the changing trends of the pair-wise attention weights during training SAF on Squirrel and Cora datasets.
From Fig.~\ref{fig:additional_results}(b-c), the average weights for filtering and aggregation start similarly but diverge throughout training, showing different trends in heterophilic and homophilic graphs. On the heterophilic graph Squirrel, both weights converge to similar values, demonstrating their mutual importance in modeling complex connectivity. 
Conversely, $\boldsymbol{\kappa}_f$ becomes dominant on the homophilic graph Cora due to the sufficiency of node proximity information for label prediction, thereby diminishing the relevance of $\boldsymbol{\kappa}_a$ and non-local aggregation.

\begin{figure}[t]
\centering
\includegraphics[width=0.48\textwidth]{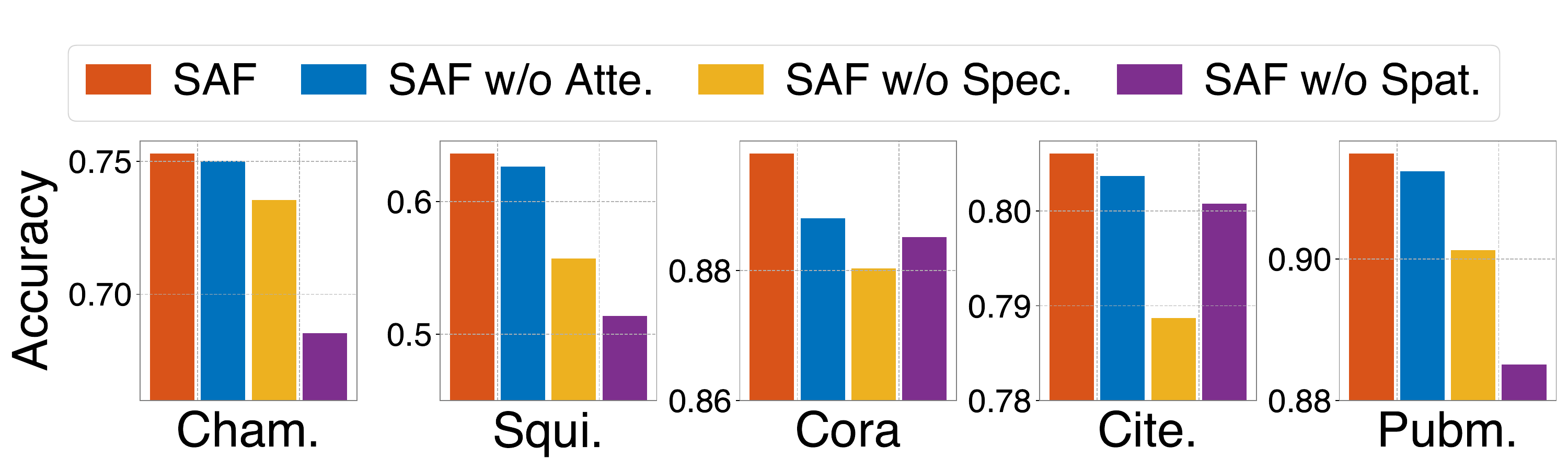}
\caption{Ablation study of SAF framework on five datasets.}
\label{fig:abla}
\end{figure}

\begin{figure}[t]
\centering
\subfloat[Chameleon]{\includegraphics[width=0.155\textwidth]{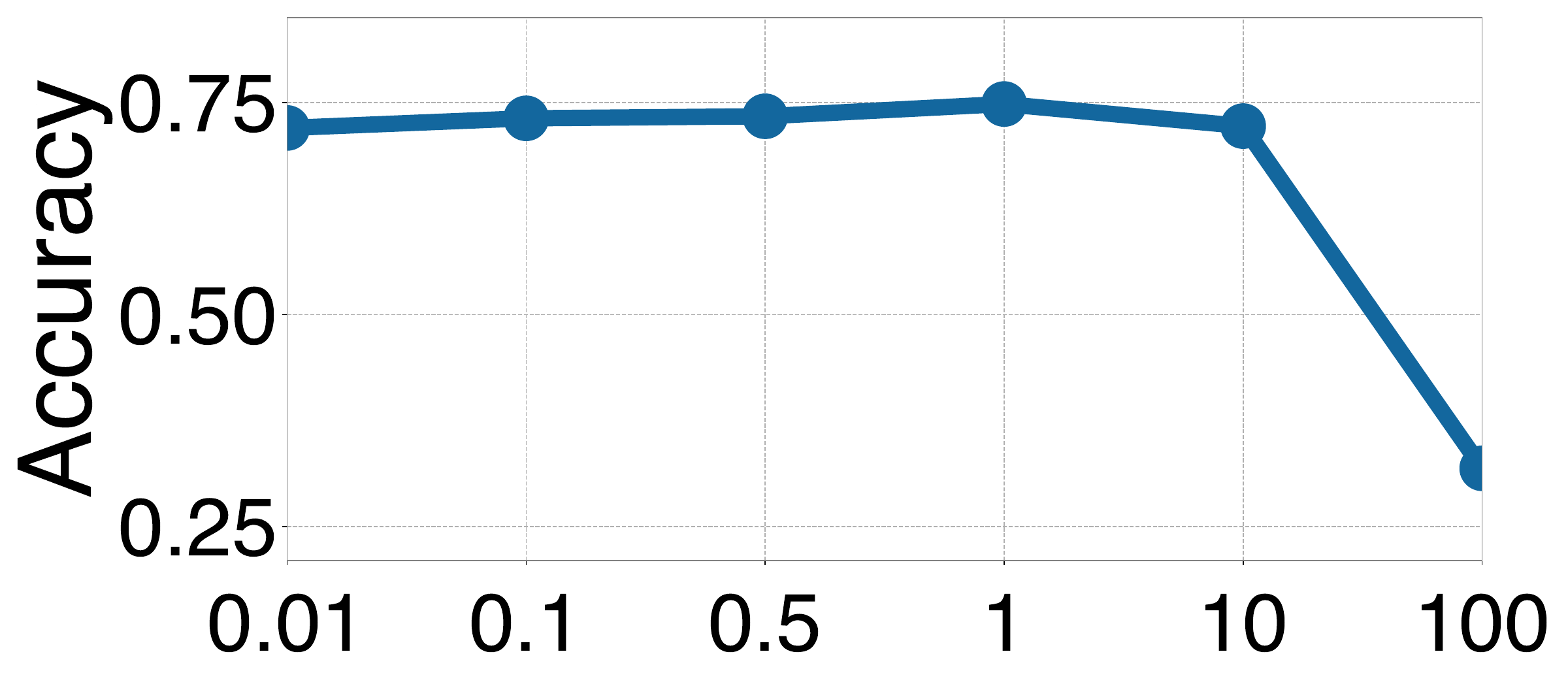}}
\subfloat[Texas]{\includegraphics[width=0.155\textwidth]{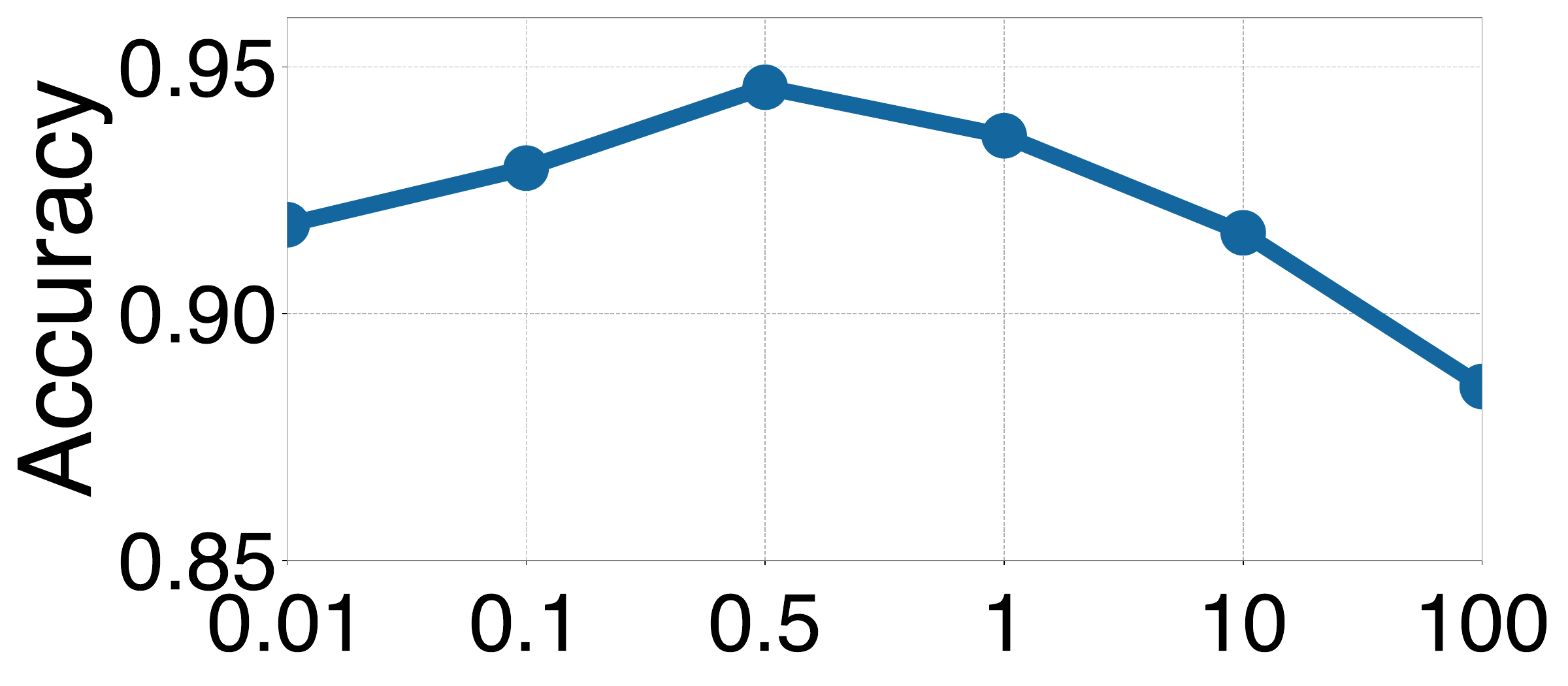}}
\subfloat[Cora]{\includegraphics[width=0.155\textwidth]{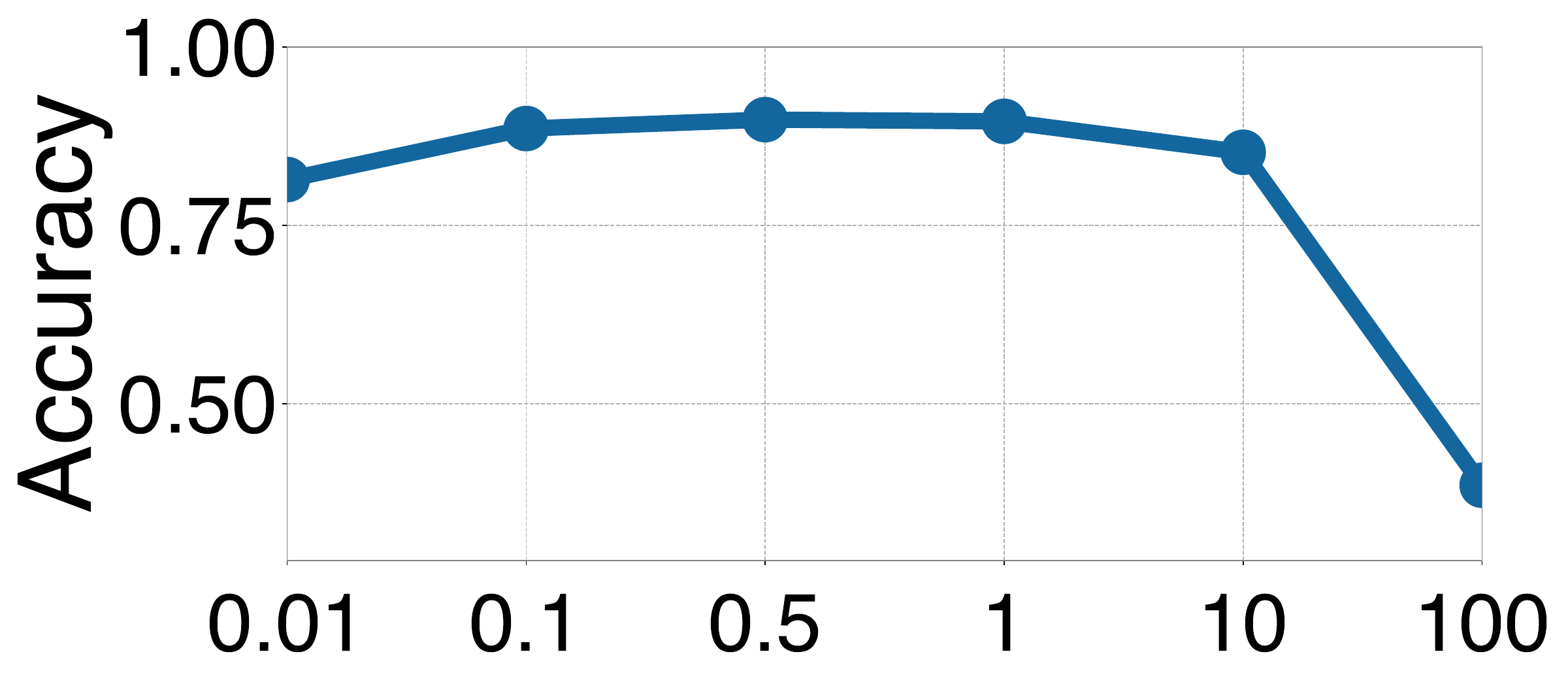}}
\hfill
\subfloat[Chameleon]{\includegraphics[width=0.155\textwidth]{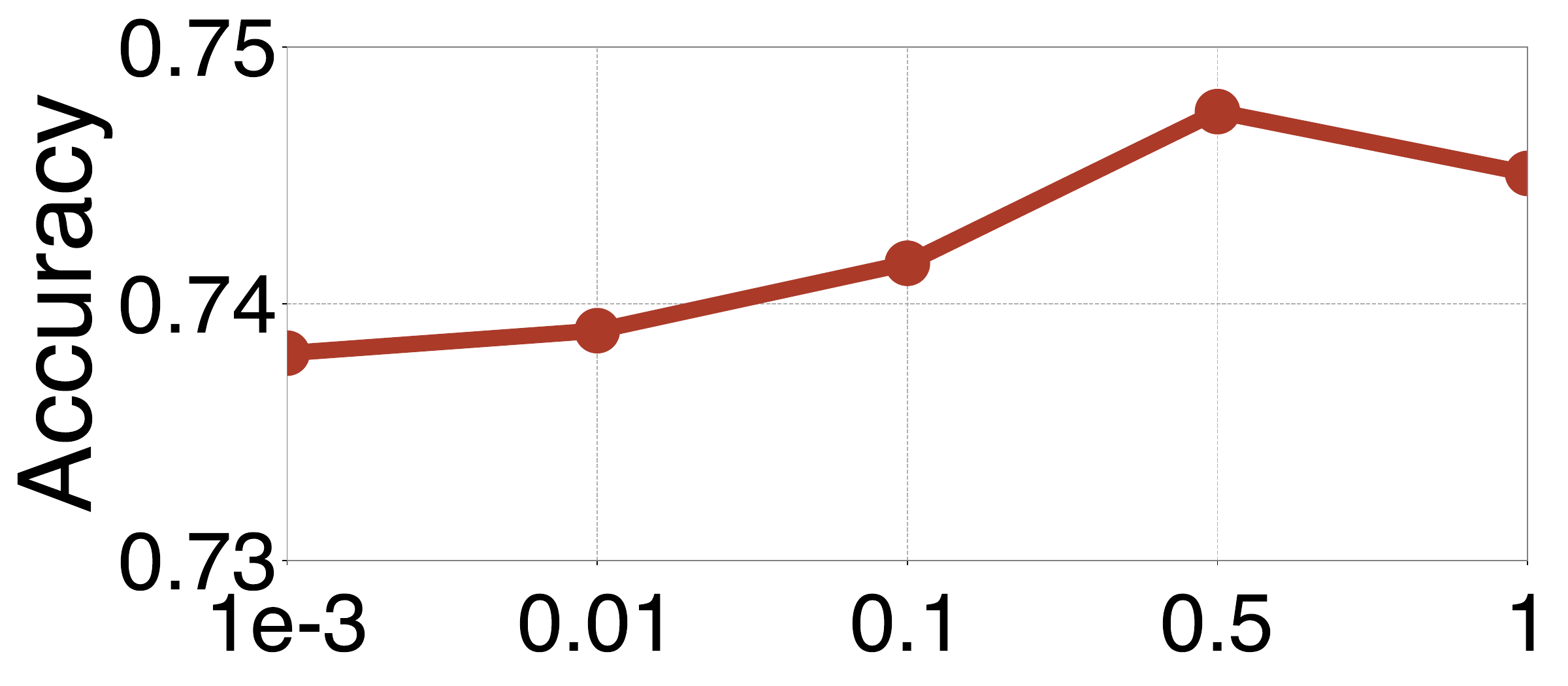}}
\subfloat[Texas]{\includegraphics[width=0.155\textwidth]{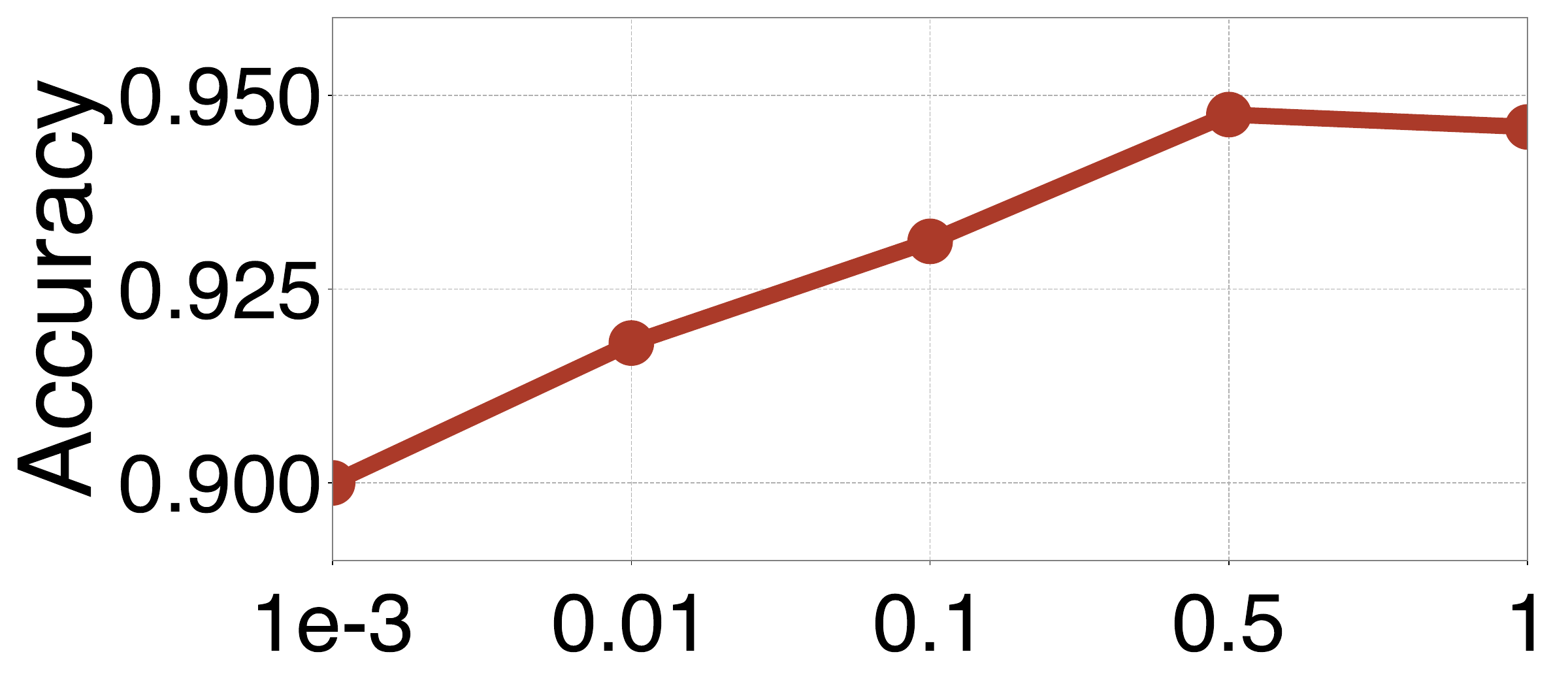}}
\subfloat[Cora]{\includegraphics[width=0.155\textwidth]{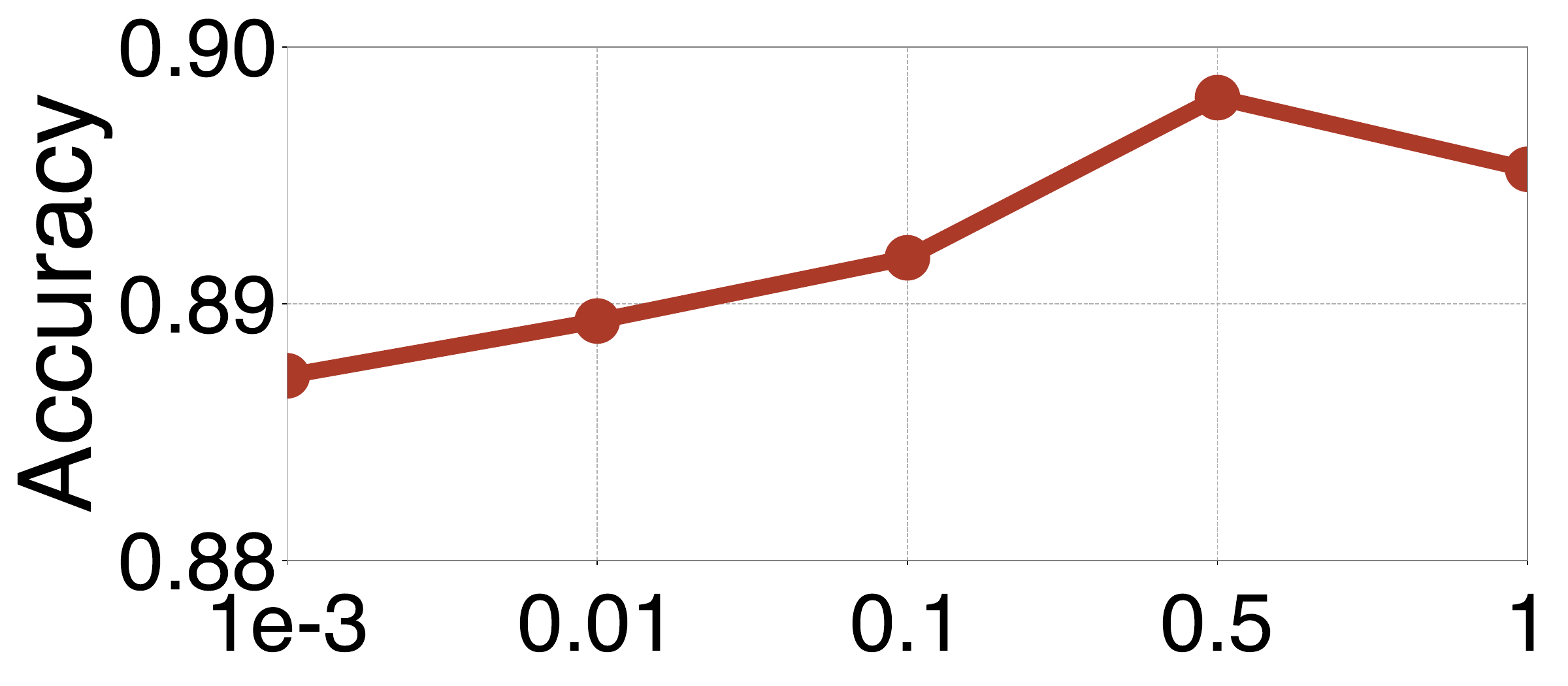}}
\hfill
\subfloat[Chameleon]{\includegraphics[width=0.155\textwidth]{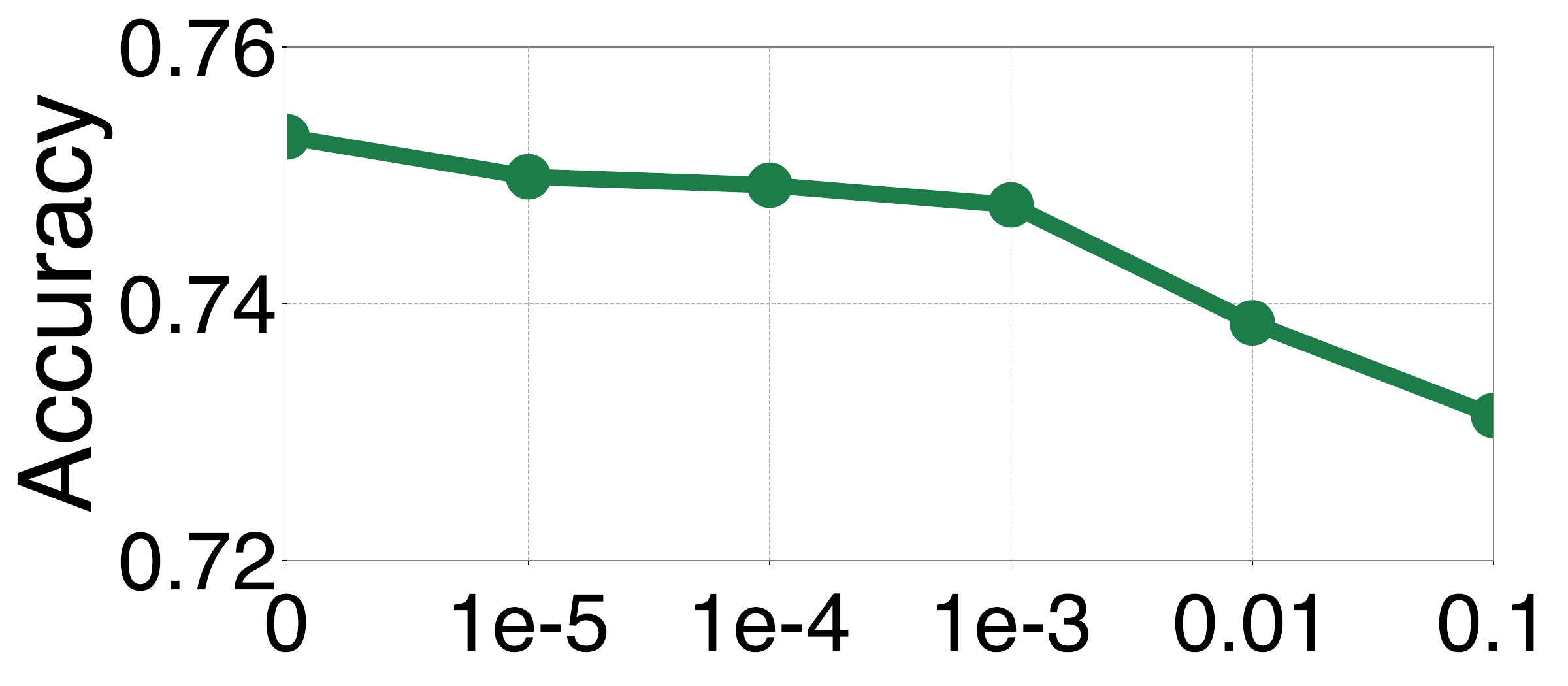}}
\subfloat[Texas]{\includegraphics[width=0.155\textwidth]{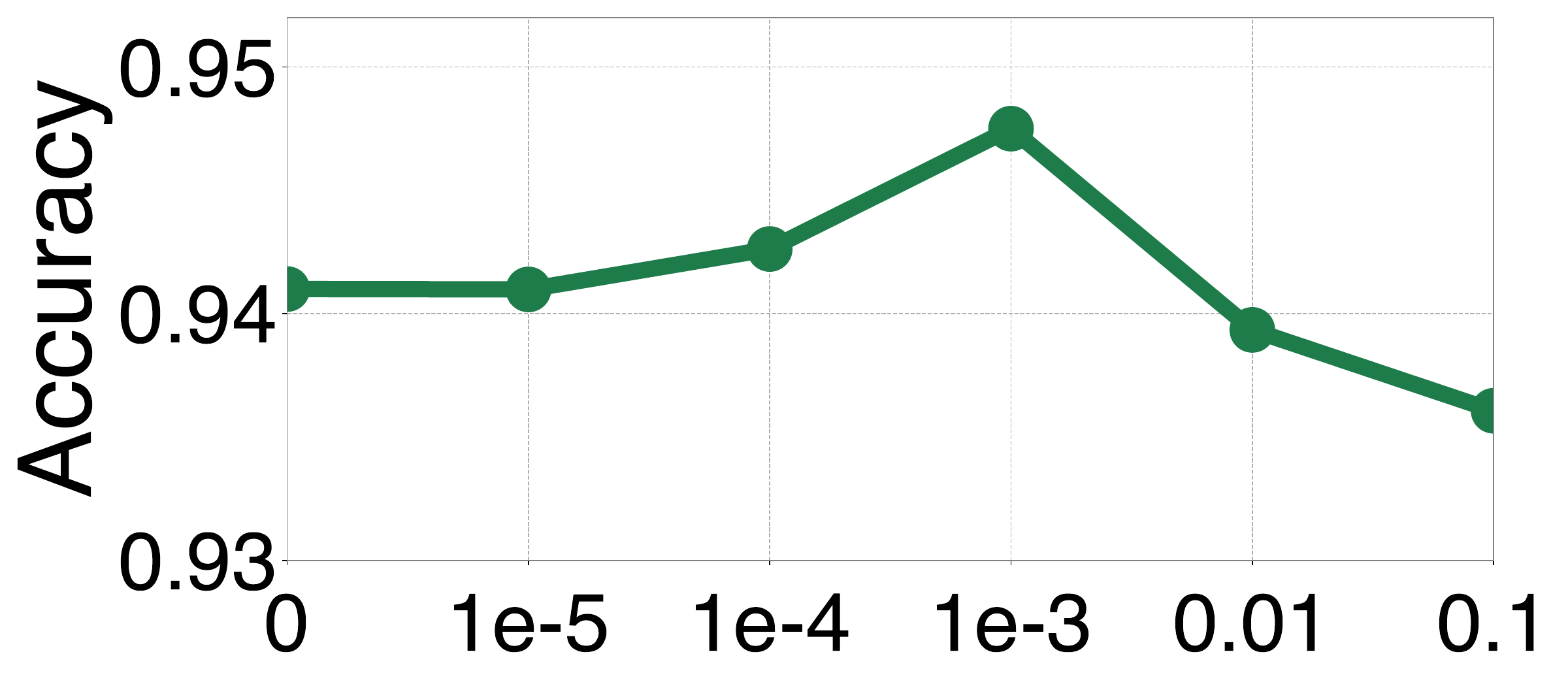}}
\subfloat[Cora]{\includegraphics[width=0.155\textwidth]{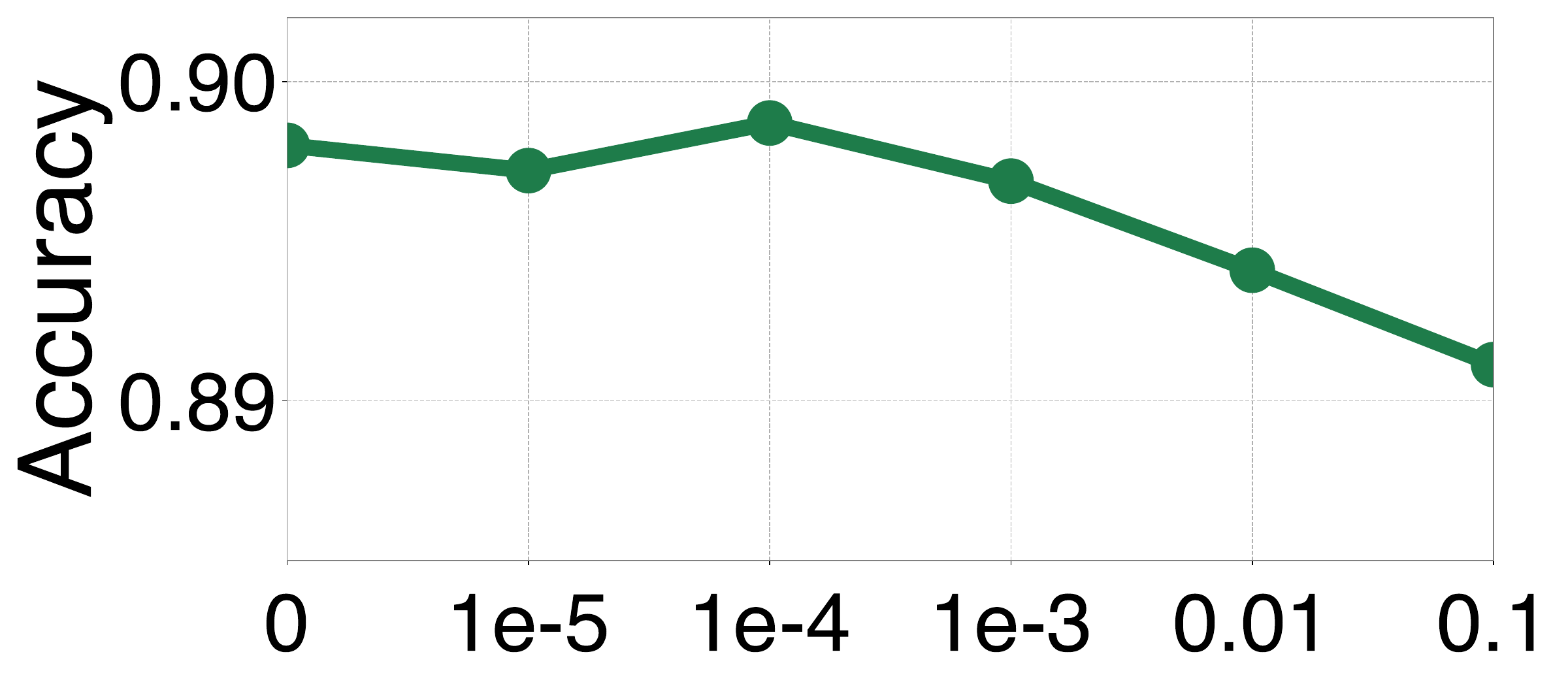}}
\hfill
\subfloat[Chameleon]{\includegraphics[width=0.155\textwidth]{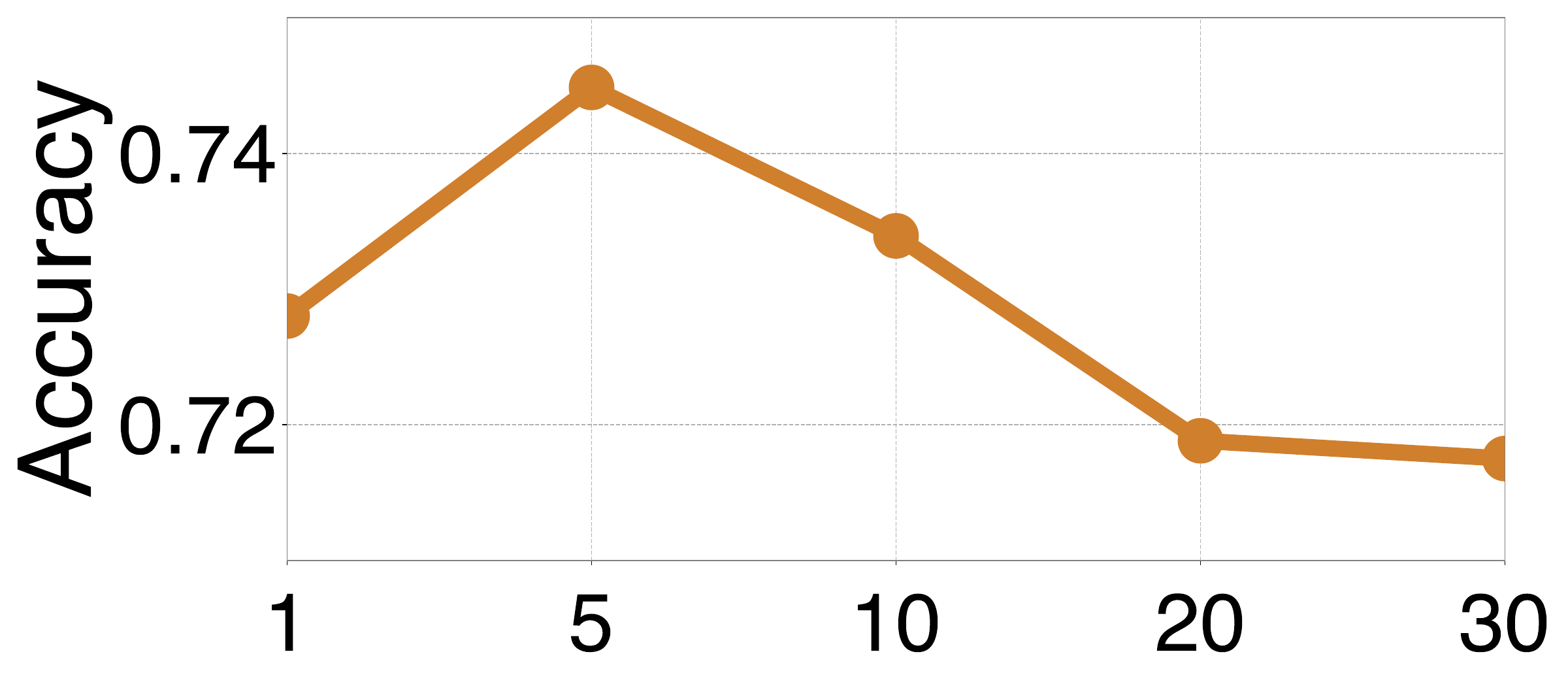}}
\subfloat[Texas]{\includegraphics[width=0.155\textwidth]{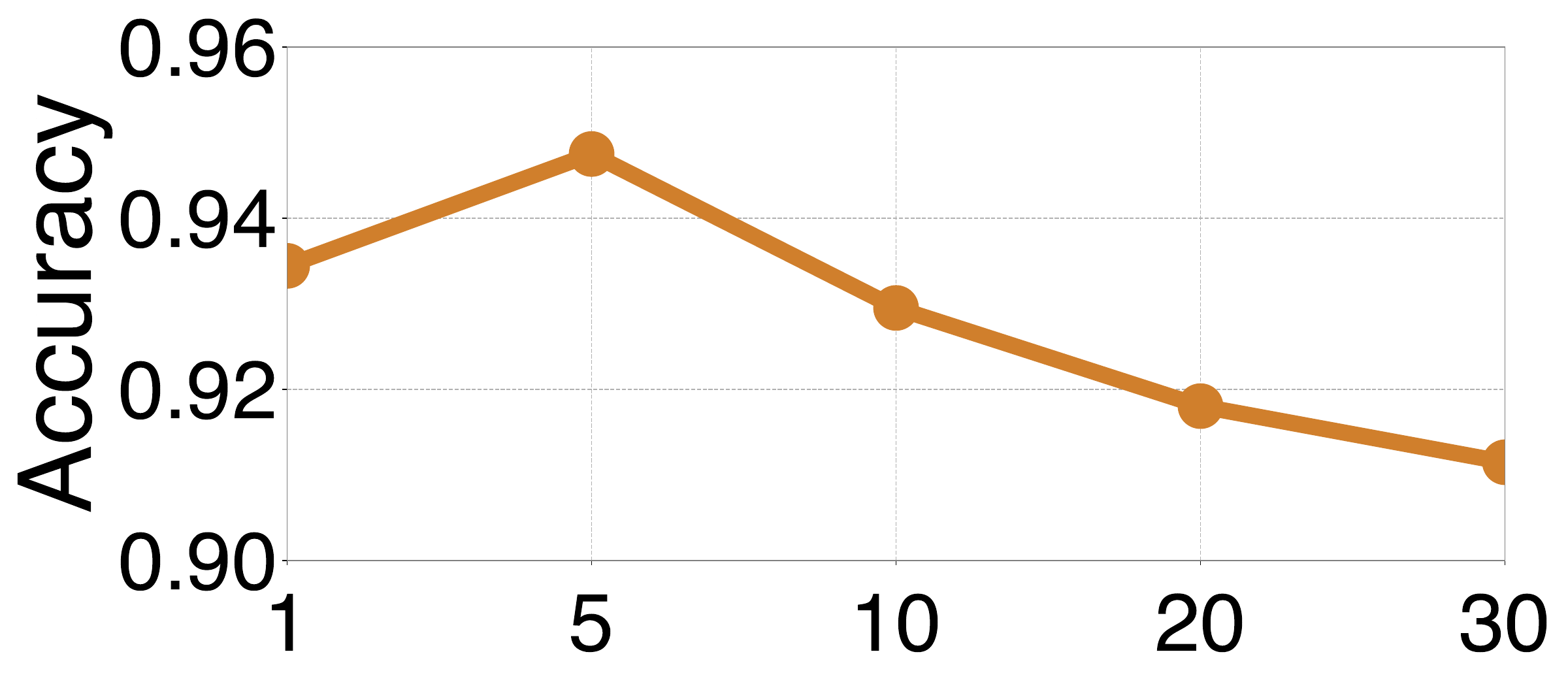}}
\subfloat[Cora]{\includegraphics[width=0.155\textwidth]{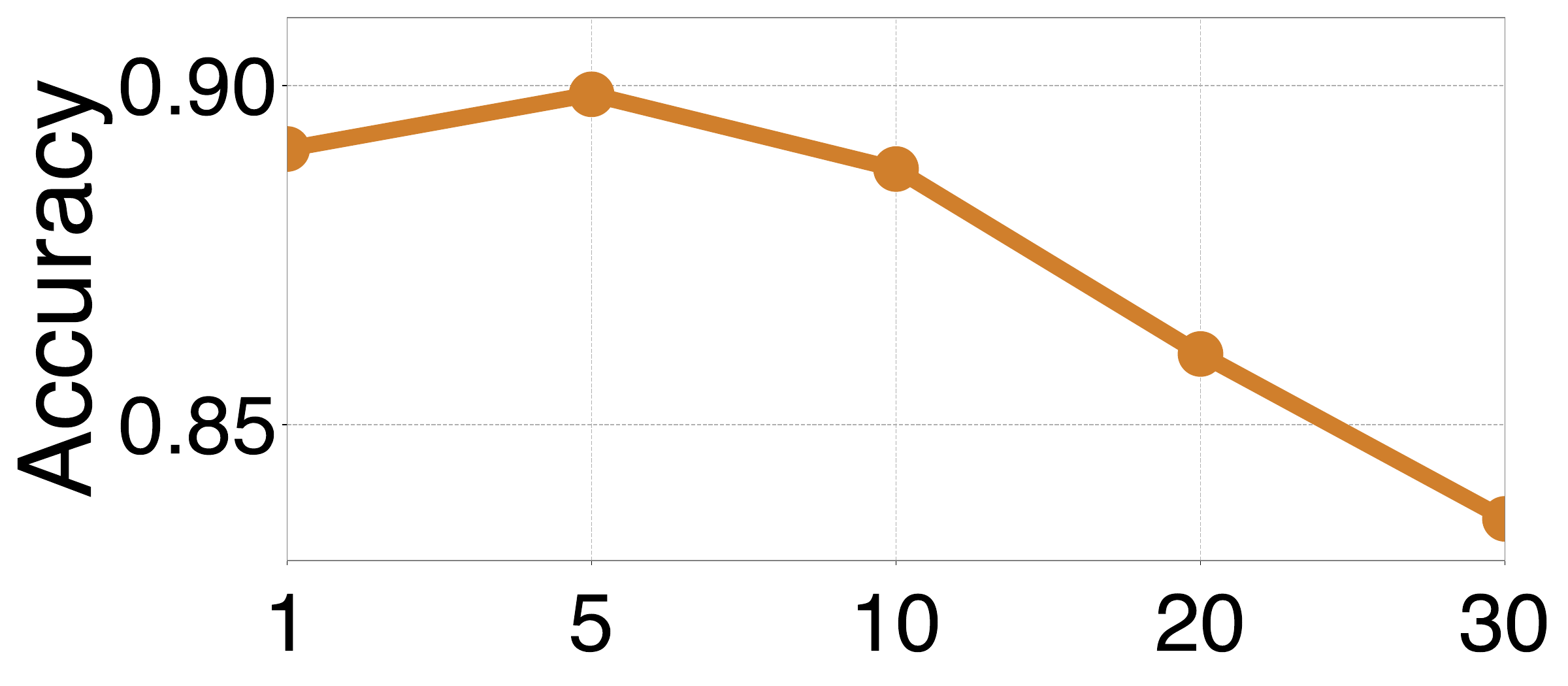}}
\caption{Sensitivity analysis of hyper-parameters: $\tau$, $\eta$, $\epsilon$, and $L$ from top to bottom rows.}\label{fig:sensi}
\end{figure}

\subsection{Ablation Study}
This section aims to validate our designs by comparing SAF with its three ablated variants -- SAF w/o Atte., SAF w/o Spec., and SAF w/o Spat. -- in full-supervised node classification. Specifically, Atte., Spec., and Spat. respectively refers to: attention mechanism in ``Node-wise Prediction Amalgamation", ``Non-negative Spectral Filtering", and ``Non-local Spatial Aggregation". For SAF w/o Atte., we remove the attention mechanism and equally blend predictions from different domains. SAF w/o Spec. abandons the spectral filtering phase, practically setting $\boldsymbol{\kappa_f}=\mathbf{0}, \boldsymbol{\kappa_a}=\mathbf{1}$.
As the SAF w/o Spat. configuration is equivalent to BernNet model, the corresponding results are posted directly. From Fig.~\ref{fig:abla}, we can draw several insights: \textbf{1)} The impact of Atte. module on our SAF varies by datasets, e.g., on Chameleon and Squirrel, showing a slight performance reduction upon its removal. This observation aligns with our observation that their optimal attention values are close to an even split, as suggested in Figures~\ref{fig:additional_results}(a).
Conversely, Cora dataset exhibits a notable drop, due to its optimal attention weights being far from even, as depicted in Fig.~\ref{fig:additional_results}(b); \textbf{2)} Spectral filtering (Spec. module) remains vital for discriminative node representation learning. Specifically, the quality of the adapted new graph fundamentally hinges on the graph spectral filters' training, as underscored by their theoretical interaction in Eq.~(\ref{eq:theo_interplay}). Practically, the absence of spectral filtering markedly reduces model accuracy, confirming its importance in SAF; \textbf{3)} This visualization not only reaffirms the pivotal role of the non-local aggregation (Spat. module),
but also underscores its position as the most crucial component in advancing spectral GNNs within the SAF framework.

\begin{table}[t]
\caption{Evaluations on new heterophilic graph datasets.}\label{tab:newheter_nc}
\centering
\setlength\tabcolsep{2pt}
\resizebox{0.48\textwidth}{!}{
\begin{tabular}{lccccc}
\toprule
\textbf{Method} & \textbf{Mine.} & \textbf{Tolo.} & \textbf{Amaz.} & \textbf{Roma.} & \textbf{Penn94} \\
\midrule
MLP         & 50.61{$\pm$0.87}      & 74.58{$\pm$0.69}   & 45.50{$\pm$0.38}         & 66.11{$\pm$0.33}       & 74.58$\pm$0.37\\
GCN         & 72.25{$\pm$0.60}      & 76.56{$\pm$0.85}   & 48.06{$\pm$0.39}         & 53.49{$\pm$0.33}       & 82.47$\pm$0.27\\
APPNP       & 68.48{$\pm$1.20}      & 74.13{$\pm$0.62}   & 48.12{$\pm$0.37}         & 72.99{$\pm$0.46}       & 75.29$\pm$0.27\\
GPR-GNN     & 89.76{$\pm$0.53}      & 75.82{$\pm$0.50}   & 49.06{$\pm$0.25}         & 73.19{$\pm$0.24}       & 81.38$\pm$0.16\\
ChebNetII   & 83.62{$\pm$1.51}      & 78.95{$\pm$0.49}                              & 49.76{$\pm$0.36}         & 74.52{$\pm$0.54}      & 83.12$\pm$0.22\\
JacobiConv  & 89.88{$\pm$0.33}      & 77.24{$\pm$0.39}   & 43.89{$\pm$0.28}         & 74.30{$\pm$0.50}       & \underline{83.35$\pm$0.11}\\
NodeFormer  & \underline{89.89{$\pm$0.46}}      & \textbf{80.31{$\pm$0.75}}   & 43.67{$\pm$1.54}         & 73.59{$\pm$0.60}       & 69.66{$\pm$0.83}                                \\
GloGNN++    & 72.59{$\pm$1.54}      & 79.01{$\pm$0.84}   &\underline{50.03{$\pm$0.29}}                     & 66.10{$\pm$0.26}       & 73.15{$\pm$0.59}                                \\
FE-GNN	&84.68$\pm$0.36	&79.31$\pm$0.59	&49.46$\pm$0.29	&74.50$\pm$0.30	     &82.30$\pm$0.54\\
\midrule
BernNet     & 77.75{$\pm$0.61}      & 75.35{$\pm$0.63}   & 49.84{$\pm$0.52}         & \underline{74.56{$\pm$0.74}}      &82.47$\pm$0.21     \\
SAF	&\textbf{90.54{$\pm$0.30}}	&\underline{79.38$\pm$0.58}	&\textbf{50.49{$\pm$0.28}}	&\textbf{74.87$\pm$0.22}	&\textbf{83.86$\pm$0.26} \\
Improv.     & 12.79\%          & 4.03\%        & 0.65\%              & 0.31\%            & 1.39\%\\
\bottomrule
\end{tabular}}
\end{table}

\subsection{Parameter Study}
This section presents the sensitivity analysis of hyper-parameters including $\tau$, $\eta$, $\epsilon$, and $L$. Fig.~\ref{fig:sensi} visualizes how varying these parameters within a broad range influences learning performance, showcasing our model’s robust stability over diverse settings. 
Beyond empirical observation, we also provides deeper insights into parameter understanding and rationalizes the chosen ranges for parameter searching: 
\textbf{1)} The scaling parameter $\tau = \frac{\alpha}{1-\alpha}$, crucial in new graph construction in Eq.~(\ref{eq:theo_interplay}), stems from the trade-off parameter $\alpha \in (0, 1)$ within the graph optimization problem in Eq.~(\ref{eq:generalized_opt}). While theoretically we have $0 =\frac{0}{1-0} < \tau < \frac{1}{1-1} = \infty$, practical considerations for extracting structural information suggest a larger penalty on the trace objective term $\mathrm{tr}(\mathbf{Z}^T \gamma_{\theta}(\hat{\mathbf{L}}) \mathbf{Z})$, i.e., keeping $\alpha < 0.5$, thereby limiting $\tau < \frac{0.5}{1-0.5} = 1$. This rationale substantiates our selection of $\tau$ within the set $\{0.1, 0.2, ..., 1\}$ as stated in Section~\ref{apidx:param_searching}, aligning with the observed optimal performance 
in Figures~\ref{fig:sensi}(a)-(e).
When addressing graphs with noisy structure, we may adjust the upper limit of $\alpha$ to $t \in (0, 1)$, setting $\tau$'s maximum possible value to $\frac{t}{1-t}$.
For graph benchmarking evaluations in this work, where extracting structural information is important, we practically set $t=0.5$;
\textbf{2)} For the non-local aggregation layer number $L$, a noticeable decline in model performance is observed when $L$ exceeds 10. This is attributed to the non-local nature of our new graph, which facilitates efficient information exchange between nodes. Exceeding a certain number of layers may potentially lead to oversmoothing, where there is an overemphasis on global information, thus degrading model performance. However, choosing the number of layers within a reasonable range generally ensures consistent and impressive model performance, as verified in 
Figures~\ref{fig:sensi}(j)-(l).

\begin{table*}[t]
\caption{Full-supervised node classification accuracy (\%) while implementing SAF upon ChebNetII. 
}
\label{tab:chebNetAsBase}
\centering
\resizebox{0.98\textwidth}{!}{
\begin{tabular}{lcccccccc}
\toprule
\textbf{Method}    & \textbf{Cham.}       & \textbf{Squi.}       & \textbf{Texas}       & \textbf{Corn.}       & \textbf{Actor}       & \textbf{Cora}        & \textbf{Cite.}       & \textbf{Pubm.}       \\
\midrule
ChebNetII & 71.37$\pm$1.01  & 57.72$\pm$0.59  & 93.28$\pm$1.47  & 92.30$\pm$1.48  & 41.75$\pm$1.07  & 88.71$\pm$0.93  & 80.53$\pm$0.79  & 88.93$\pm$0.29  \\
SAF-Cheb  & 74.97$\pm$0.66 & 64.06$\pm$0.59 & 94.43$\pm$1.81 & 92.62$\pm$2.13 & 42.65$\pm$1.01 & 89.56$\pm$0.64 & 80.68$\pm$0.68 & 91.27$\pm$0.34 \\
SAF-Cheb-$\epsilon$
& 75.25$\pm$0.96	& 64.42$\pm$0.82	& 94.26$\pm$1.64	& 93.12$\pm$1.64	& 42.79$\pm$1.04	& 89.61$\pm$0.71	& 81.08$\pm$0.68	& 91.73$\pm$0.18 \\
\midrule
Improv.   & 3.88\%        & 6.70\%        & 1.15\%        & 0.82\%        & 1.04\%        & 0.90\%        & 0.15\%        & 2.80\%       \\
\bottomrule
\end{tabular}
}
\end{table*}

\subsection{New Benchmarks for Graph Heterophily}\label{apdix:new_bench}
For a more extensive evaluation across various domains, we also test SAF on five recently introduced datasets, including Mine., Tolo., Amaz., Roma., and Penn94~\cite{platonov2023critical,lim2021large}. In this context, we draw comparisons solely with MLP, GCN, APPNP, along with seven GNN models that have previously shown promising results in prior tasks, namely GPR-GNN, BernNet, ChebNetII, JacobiConv, NodeFormer, GloGNN++ and FE-GNN.
Table~\ref{tab:newheter_nc} lists the average classification accuracies, obtained over random splits provided by~\cite{platonov2023critical,lim2021large}, with a distribution of 50\%/25\%/25\% for training/validation/testing. 
In summary, SAF achieves significant performance gains of 
12.79\% and 4.03\%
on Minesweeper and Tolokers datasets, respectively, while maintaining competitiveness on the others.

\subsection{SAF with ChebNetII as Base Model}\label{apdix:chebNetAsBase}

To expand the versatility of our SAF framework, we introduced ChebNetII~\cite{he2022chebnetii} as an alternative base model, chosen for its adherence to the non-negative constraint, critical in our model design as stated in Section~\ref{sec:saf}. The rationale behind this choice is ChebNetII's use of Chebyshev interpolation for learning Chebyshev polynomials, where 
the constraint can be ensured by
keeping its learnable parameters 
$\{\gamma_j\}_{j=0}^K$ non-negative.
Our experiments, as shown in Table~\ref{tab:chebNetAsBase}, confirm that SAF can significantly enhances ChebNetII's performance, underscoring the framework's flexibility with different spectral filters. 
Interestingly, we observed that SAF, utilizing Bernstein polynomials (SAF-Bern), slightly surpasses its performance with Chebyshev polynomials (SAF-Cheb) in most datasets. The margin of improvement over the base model is also more pronounced with SAF-Bern. This phenomenon could be attributed to the $g_\phi(\lambda) \leq 1$ constraint within SAF (refer to Section~\ref{sec:saf}), necessitating the rescaling of filter functions by their maximum 
values. For Bernstein polynomials, this maximum is readily obtained as the largest polynomial coefficient $\max \{\phi_k\}_{k=0}^K$, as per Proposition 3. However, for Chebyshev polynomials, the best theoretical upper bound is the sum of absolute coefficients, $\sum_{k=0}^K |\phi_k|$, which is comparatively less precise. This difference may impact the quality of graph construction and, subsequently, compromise the model's performance. Exploring these nuances will be a focal point of our future research.

\begin{figure}[t]
\centering
\includegraphics[width=0.48\textwidth]{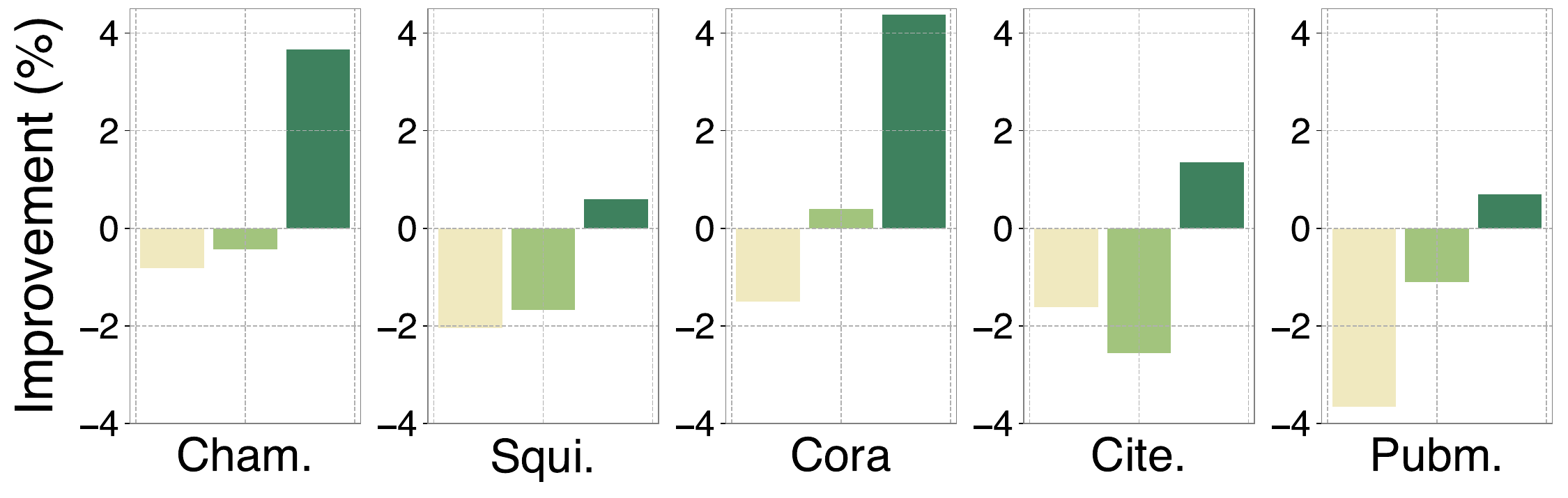}
\caption{Improved classification accuracy by GloGNN++, LRGNN and our SAF (from left to right) over the vanilla GCN under semi-supervised setting (2.5\%/2.5\%/95\%).}
\label{fig:sparse_advtg}
\end{figure}

\subsection{Supervision Dependence in GloGNN++ and LRGNN}\label{apdix:sparse_advantange}
This subsection empirically verifies the heavy reliance on label supervision for both GloGNN++~\cite{li2022finding} and LRGNN~\cite{liang2024predicting} models. In Fig.~\ref{fig:sparse_advtg}, we compare the improved classification accuracy of these models and our SAF upon the vanilla GCN under a semi-supervised setting. The figure illustrates that, despite competitive results under dense supervision (Table~\ref{tab:full_nc}), the performance of both GloGNN++ and LRGNN deteriorates, demonstrating negative optimization on many datasets, when the label rate drops approximately from 60\% to 2.5\%. This degradation highlights their dependency on high label rates for effective learning. In contrast, our SAF constantly delivers good performance regardless of the supervision level, showing its effectiveness in both high and low label rate scenarios.

\begin{table*}[t]
\caption{Time overheads (s).}\label{tab:timeOveheads}
\centering
\setlength\tabcolsep{10pt}
\resizebox{0.98\textwidth}{!}{
\begin{tabular}{l|ccccccc|cc}
\toprule 
\textbf{Method}
    & \textbf{Cham.} & \textbf{Squi.} & \textbf{Texas} & \textbf{Corn.} & \textbf{Actor} & \textbf{Cora}  & \textbf{Cite.} & \textbf{Pubm.} & \textbf{Penn94}  \\
\# Nodes           &2,277     &5,201     &183     &183     &7,600     &2,708     &3,327     &19,717     &41,554      \\
\# Edges           &36,101      &217,073      &309      &295      &33,544      &5,429      &4,732      &44,338      &1,362,229 \\
\midrule
BernNet           & 8.36  & 13.74 & 3.92  & 4.16  & 4.88  & 5.24  & 5.52  & 6.06  &24.05    \\
ChebNetII     & 22.82 & 30.73 & 11.47 & 9.64  & 14.88 & 19.96 & 16.14 & 36.91 &41.67  \\
FE-GNN     &3.99	  &45.89	  &1.03	  &0.84	  &0.98	  &2.54	  &2.06	  &6.78     &7.33 \\
NodeFormer     & 58.96 & 79.66 & 14.29 & 18.89 & 66.20 & 19.25 & 32.00 & 68.57 & 122.91 \\
GloGNN++ 	  & 35.63	& 68.31	 & 4.47	 & 3.00	 &73.13	 &32.68	&12.35	  &5266.53    	&3614.37\\
\midrule
SAF           & 11.55 & 18.78 & 4.38  & 4.70  & 5.36  & 6.04  & 6.12  & 18.43 &23.49 \\
Decomposition   &0.58	&1.59		&0.02		&0.02		&3.93	&1.00		&0.77	&21.34		&4.76 
\\
\bottomrule
\end{tabular}}
\end{table*}

\begin{table*}[t]
\caption{Space overheads (MB).}\label{tab:spaceOveheads}
\centering
\setlength\tabcolsep{10pt}
\resizebox{0.98\textwidth}{!}{
\begin{tabular}{l|ccccccc|cc}
\toprule 
\textbf{Method}
    & \textbf{Cham.} & \textbf{Squi.} & \textbf{Texas} & \textbf{Corn.} & \textbf{Actor} & \textbf{Cora}  & \textbf{Cite.} & \textbf{Pubm.} & \textbf{Penn94}  \\
\midrule
BernNet           & 72  & 232 & 5  & 5  & 292  & 64  & 152  & 1546  &1902    \\
ChebNetII     & 72  &231 &5 &5  & 291 & 63 & 152 & 1584 &1850  \\
FE-GNN     &1337	  &6919	  &23	  &10	  &416	  &302	  &740	  &2213     &5854 \\
NodeFormer     & 1522 & 3965 & 15 & 37 & 775 & 480 & 764 & 2119 & 3056 \\
GloGNN++ 	  & 290	&1525	 & 5	 & 5	 &2471	 &331	&607	  &17892    	&25260\\
\midrule
SAF           & 112 & 440 & 5  & 5  & 733  & 120  & 237  & 4515 &8491 \\
Decomposition   &141	&540		&1		&1		&1206	&140		&239	&7641		&4 
\\
\bottomrule
\end{tabular}}
\end{table*}

\subsection{Time and Space Overheads}\label{apdix:run_time}

\subsubsection{Eigendecomposition} 
Our SAF framework pre-computes eigendecomposition once per graph and reuses it in Eq.~(\ref{eq:theo_interplay}) in both training and inference. This aspect is crucial, as the forward-pass cost in model training often exceeds the preprocessing expense of eigendecomposition. To empirically validate this, we compare the time overheads of eigendecomposition with the training times of various models in Table~\ref{tab:timeOveheads}. Overall, we have following observations:
\textbf{1)} For datasets with a small number of nodes, the time consumed by decomposition is significantly less than the time required for model training;
\textbf{2)} For medium-sized graphs such as Pubmed, while the full decomposition time exceeds that of BernNet, it still maintains efficiency against more advanced GNNs such as ChebNetII, 
NodeFormer and GloGNN++; \textbf{3)} Moving to the large-scale graph, Penn94 (with 41,554 nodes and 1,362,229 edges), where only partial eigendecomposition with 100 extremal eigenvalues is considered, the computation time is markedly reduced compared to all the models. For space overheads in Table~\ref{tab:spaceOveheads}, similar patterns can be observed.

\subsubsection{Model Comparison} 
In Table~\ref{tab:timeOveheads}, we compare the running times of our SAF against two notable spectral GNNs (BernNet, ChebNetII), two non-local GNNs (NodeFormer, GloGNN++), 
and one unified GNN model (FE-GNN).
Generally, one can observe that SAF, while slightly slower than its base model, BernNet, due to the integration of non-local spatial aggregation, remains more efficient than or comparable to other SOTA methods, particularly those also employing non-local approaches such as NodeFormer and GloGNN++.

\section{Conclusion}
This study presents a cross-domain analysis on GNN models, offering a fresh perspective by rethinking spectral GNNs from a spatial lens. 
We reveal that spectral GNNs fundamentally leads the original graph to an adapted new one, which exhibits non-locality and accommodates signed edge weights to reflect label consistency among nodes. This insight leads to our proposed Spatially Adaptive Filtering (SAF) framework, enhancing spectral GNNs for more effective and versatile graph representation learning. While SAF adeptly captures long-range dependencies and addresses graph heterophily, we acknowledge two limitations of this work:~1) the non-negative constraints of the proposed SAF on graph filters might limit model expressiveness, indicating room for theoretical refinement;~2) although this study focuses on a node-level investigation, it raises intriguing questions about the implications of spectral GNNs at the graph-level in the spatial domain. Future work could expand this examination by relaxing theoretical constraints or exploring the cross-domain interplay from a broader graph-level viewpoint.

\section*{Acknowledgments}
The work was supported by the following: National Natural Science Foundation of China under No. 92370119, and 62376113.

\bibliography{saf}
\bibliographystyle{IEEEtran}

\end{document}